\documentclass{article}

\PassOptionsToPackage{numbers, compress}{natbib}
 \usepackage[preprint]{neurips_2026}

\usepackage{microtype}
\usepackage{graphicx}
\usepackage{subcaption}
\usepackage{booktabs} 
\usepackage{comment}
\usepackage{wrapfig}
\usepackage{tikz}
\usepackage{tabularx}
\usepackage{etoc}

\usepackage{xurl}

\usepackage{multirow}
\usepackage{makecell}
\usepackage{colortbl} 
\usepackage{array}
\usepackage{textcomp}
\usepackage[most]{tcolorbox}
\tcbuselibrary{listings,breakable}

\usepackage{listings}
\usepackage{xcolor}

\lstdefinestyle{graspPython}{
  language=Python,
  basicstyle=\ttfamily\small,
  columns=fullflexible,
  breaklines=true,
  breakatwhitespace=true,
  showstringspaces=false,
  upquote=true,
  frame=single,
  rulecolor=\color{black},
  xleftmargin=0.6em,
  xrightmargin=0.6em,
  aboveskip=0.6em,
  belowskip=0.2em,
  keywordstyle=\bfseries,
  commentstyle=\itshape,
  numbers=none,
  morestring=[s]{"""}{"""},
  morestring=[s]{'''}{'''},
  stringstyle=\ttfamily,
}

\newcommand{\BEAS}{\begin{eqnarray*}}
\newcommand{\EEAS}{\end{eqnarray*}}
\newcommand{\BEA}{\begin{eqnarray}}
\newcommand{\EEA}{\end{eqnarray}}
\newcommand{\BEQ}{\begin{equation}}
\newcommand{\EEQ}{\end{equation}}
\newcommand{\BIT}{\begin{itemize}}
\newcommand{\EIT}{\end{itemize}}
\newcommand{\BNUM}{\begin{enumerate}}
\newcommand{\ENUM}{\end{enumerate}}
\newcommand{\BA}{\begin{array}}
\newcommand{\EA}{\end{array}}

\newcommand{\set}{\mathcal{S}}
\newcommand{\R}{\mathbb{R}}

\newcommand{\N}{\mathbb{N}}
\newcommand{\NN}{\mathcal{N}}
\newcommand{\DD}{\mathcal{D}}

\usepackage{framed}
\usepackage{fvextra} 

\DefineVerbatimEnvironment{PromptVerbatim}{Verbatim}{
  fontsize=\scriptsize,
  breaklines=true,
  breakanywhere=true,
  breaksymbolleft=\raisebox{0.6ex}{\tiny$\hookrightarrow$}\,,
  frame=single,
  framesep=2.5mm
}
\usepackage{hyperref}

\usepackage{amsmath}
\usepackage{amssymb}
\usepackage{mathtools}
\usepackage{amsthm}
\usepackage{xltabular}

\usepackage[capitalize,noabbrev]{cleveref}

\theoremstyle{plain}
\newtheorem{theorem}{Theorem}[section]

\newtheorem{lemma}[theorem]{Lemma}

\theoremstyle{definition}
\newtheorem{definition}[theorem]{Definition}

\theoremstyle{remark}
\newtheorem{remark}[theorem]{Remark}

\usepackage[textsize=tiny]{todonotes}

\title{GRASP: Deterministic argument ranking in interaction graphs}

\author{%
  Diganta Misra$^{1,2,3,4}$ \quad Antonio Orvieto$^{1,2,3}$ \quad Rediet Abebe$^{1,2,3}$ \quad Volkan Cevher$^{5}$ \\
  \\
  $^1$MPI-IS Tübingen, $^2$Tübingen AI Center, $^3$ELLIS Institute Tübingen \\
  $^4$Eberhard Karls Universität Tübingen, $^5$LIONS, EPFL \\
  \texttt{diganta.misra@tue.ellis.eu} \\
}

\begin{document}

\maketitle
\begin{abstract}
Large language models are increasingly deployed as automated judges to evaluate the strength of arguments. As this role expands, their legitimacy depends on consistency, transparency, and the ability to separate argumentative structure from rhetorical appeal. However, we show that holistic judging---a common LLM-as-a-Judge practice where a model provides a global verdict on a debate---suffers from substantial inter-model disagreement. We argue that this instability arises from collapsing a debate's complex interaction structure into a single opaque score. To address this, we propose GRASP (Gradual Ranking with Attacks and Support Propagation), a deterministic framework that aggregates stable local interaction judgments into a global ranking via a convergent attack--defense propagation operator. We show that local interaction judgments are more reproducible than holistic rankings in LLM-as-a-Judge evaluations, allowing GRASP to produce more consistent global rankings. We further show that GRASP scores do not correlate with human ``convincingness'' labels, highlighting a vital sociotechnical distinction: GRASP does not measure persuasion, factuality, or rhetorical appeal, but structural sufficiency---a defense-aware notion of argument robustness over the explicit interaction graph. Overall, GRASP offers a transparent and auditable alternative to holistic LLM judging.
\end{abstract}

\section{Introduction}\label{sec:intro}

Large language models (LLMs) are increasingly used not only to generate content, but also
to act as automated judges for evaluating discourse quality, moderating debates~\cite{chuang2025debatelargescalebenchmarkroleplaying}, and
supporting multi-agent decision-making
\citep{bai2022traininghelpfulharmlessassistant,ouyang2022training,achiam2023gpt,liang2022holistic}.
For such systems to be credible arbiters of deliberation, their evaluations must be
consistent, transparent, and grounded in explicit argumentative structure rather than
model-specific idiosyncrasies.

Most current practice relies on \emph{holistic judging}, where an LLM is presented with an
entire debate and asked to output a global verdict or ranking
\citep{zheng2023judging,liang2022holistic,thakur2025judging}.
We show that these global judgments exhibit substantial inter-model disagreement, consistent
with growing concerns about the reliability of LLM-as-a-Judge paradigms
\citep{ye2024justice,chehbouni2025neither}.
Such instability suggests that holistic judging may conflate argumentative structure with
rhetorical and stylistic preferences---e.g., biases, verbosity, tone, or narrative coherence
\citep{taubenfeld2024systematic,stureborg2024large,hu2024explaining}.

We argue that this reflects a structural limitation of the paradigm: global judgments collapse
rich dialectical interactions into a single black-box score.
Instead, we ground evaluation in \emph{local semantic interactions}, drawing on computational
argumentation and abstract argumentation frameworks
\citep{dung1995acceptability,besnard2001logic,bench2007audiences,wachsmuth2017computational}.
While models may disagree on holistic verdicts, they are markedly more consistent on local
pairwise judgments
\citep{nie2020adversarial,bowman2015snli,liu2024aligning}, making such
interactions a more reliable primitive. We therefore ask:

\begin{center}
\fbox{
\begin{minipage}{0.92\linewidth}
\textbf{Research question.}
Can argument ranking be grounded in explicit local interaction structure rather than opaque holistic LLM-as-a-Judge verdicts?
\end{minipage}
}
\end{center}

To address this question, we introduce \textsc{GRASP}
(\textbf{G}radual \textbf{R}anking with \textbf{A}ttacks and \textbf{S}upport \textbf{P}ropagation),
a convergent propagation algorithm that composes local attack and support judgments into a
global ranking by aggregating direct attacks and higher-order defenses on an explicit
interaction graph.
This yields a notion of argument strength tied to the explicit interaction graph rather than
to an argument's truth, persuasiveness, rhetorical quality, or human preference.

We formalize this target as \emph{structural sufficiency}: a defense-aware notion of argument
robustness defined relative to an explicit interaction graph.
Structural sufficiency is closest to global sufficiency
\citep{cohen2001evaluating,gurcke-etal-2021-assessing}, but evaluates robustness only with
respect to the instantiated structure, rather than all counterarguments that could be
anticipated.
It is related to gradual and ranking-based argumentation semantics
\citep{baroni2011introduction,amgoud2013ranking}.

Empirically, we show that \textsc{GRASP} produces substantially more reproducible rankings
than direct LLM judging, while its weak correlation with human ``convincingness'' labels
supports the distinction between structural sufficiency and persuasive effectiveness.

\vspace{0.5em}
\noindent\textbf{Contributions.}
\begin{itemize}
\item We document substantial inter-model disagreement in holistic LLM-based judging
and show that local interaction judgments are more reproducible (\S\ref{sec:experiments}, Appendix~\ref{app:idebate_sanity}).
\item We introduce \textbf{\textsc{GRASP}}, a convergent propagation algorithm for structural
argument ranking over explicit interaction graphs (\S\ref{sec:grasp_operator}, \S\ref{sec:grasp_dynamics}).
\item We formalize \textbf{structural sufficiency}, a defense-aware notion of argument
robustness over explicit interaction graphs (\S\ref{sec:struct_suff}).
\item We show that \textsc{GRASP}-based rankings are highly reproducible across models while
capturing a notion of robustness distinct from human convincingness
(\S\ref{sec:experiments},     Appendix~\ref{sec:ddo}).
\end{itemize}

\section{GRASP: A Structural Strength Propagation Operator}
\label{sec:grasp_operator}

Motivated by the unreliability of holistic argument evaluation and the relative consistency
of local relational judgments, we introduce \textsc{GRASP} (Gradual Ranking with Attacks and
Support Propagation), an iterative operator that computes continuous argument strengths from
an explicit interaction graph. \textsc{GRASP} aggregates local attack and defense relations
into a global ranking while remaining agnostic to rhetorical appeal, persuasion, or
hypothetical objections.

\subsection{Background: Abstract Argumentation and Ranking Semantics}

We build on \emph{Abstract Argumentation Frameworks} (AAFs)~\citep{dung1995acceptability}, where arguments are nodes and
attacks are directed edges. Classical AAF semantics focus on set-valued notions of
acceptability, such as admissible, preferred, stable, and grounded extensions
~\citep{bonzon2016comparative,baroni2011introduction}, which are ill-suited for
fine-grained comparison among individual arguments.

Ranking-based semantics address this limitation by assigning each argument a numerical or
ordinal strength~\citep{amgoud2013ranking}. A prominent example is the
\emph{H-categorizer}~\citep{besnard2001logic}, which penalizes arguments according to the
total strength of their attackers. While effective as a local
heuristic, such methods treat attacks independently and do not explicitly model how
arguments may be defended by other arguments.

Our aim is to retain the simplicity and interpretability of attack-based rankings while
incorporating defense through an explicit propagation rule.

\subsection{Weighted Interaction Graphs}

Let $A = \{a_1,\dots,a_n\}$ be a set of arguments. \textsc{GRASP} operates on a weighted
interaction graph $\mathcal{G}=(A,W,D)$, with attack matrix
$W \in [0,1]^{n \times n}$ and defense matrix $D \in [0,\infty)^{n \times n}$.
Here, $W_{ij}$ is the strength with which $a_i$ attacks $a_j$, and $D_{kj}$ is the extent
to which $a_k$ defends $a_j$.

\subsection{The GRASP Update Rule}

Given an initial strength vector $\mathbf{s}^{(0)} \in \mathbb{R}_{\ge 0}^n$, \textsc{GRASP}
iteratively updates argument strengths through a nonlinear operator that balances the
weakening effect of attacks against the reinforcing effect of defense.

\paragraph{Undamped operator.}
We first define the undamped GRASP operator $G : \mathbb{R}^n \to \mathbb{R}^n$
coordinatewise
\begin{equation}
\label{eq:grasp_core}
G(s)_j
\;=\;
\frac{1 + \beta \sum_k D_{kj} s_k}
     {1 + \alpha \sum_i W_{ij} s_i},
\end{equation}
where the denominator penalizes argument $a_j$ according to the total strength of its
attackers, while the numerator rewards it according to the total strength of its defenders.
The parameters $\alpha,\beta \ge 0$ control the relative influence of attack and defense.

\begin{wrapfigure}{r}{0.55\textwidth}
\vspace{-8mm}
\phantomsection
\label{def:grasp}
\begin{tcolorbox}[
    title=GRASP Update Rule, 
    colback=blue!5!white, 
    colframe=blue!75!black, 
    boxsep=2pt,
    left=2pt,
    right=2pt
]
\footnotesize
Define the damped operator $\widehat G : \mathbb{R}^n \to \mathbb{R}^n$:
\begin{equation}
\label{eq:grasp-operator}
\widehat G(s) = (1-\gamma)s + \gamma\, G(s)
\end{equation}
The iteration is then given by:
\begin{equation}
\label{eq:damped-update}
s^{(t)} = \widehat G\bigl(s^{(t-1)}\bigr)
\end{equation}
or coordinatewise:
\begin{equation}
\label{eq:damped-update-coord}
s_j^{(t)} = (1 - \gamma) s_j^{(t-1)}
+ \gamma \frac{1 + \beta \sum_{k} D_{kj} s_k^{(t-1)}}
     {1 + \alpha \sum_{i} W_{ij} s_i^{(t-1)}}
\end{equation}
\end{tcolorbox}
\vspace{-9mm}
\end{wrapfigure}

\paragraph{Damped GRASP operator and iteration.}
To improve numerical stability and guarantee convergence in dense or highly cyclic
interaction structures, we employ a damped version of the operator, analogous to relaxation
schemes in iterative optimization. Repeated application yields stable argument strengths reflecting how well each argument withstands attack within the explicit interaction structure. Final rankings order arguments by converged strength.

\subsection{Interpretation}

\textsc{GRASP} generalizes attack-based ranking methods such as the H-categorizer by
explicitly incorporating defense while preserving locality and interpretability. The
resulting scores operationalize the notion of \emph{structural sufficiency} introduced in
Section~\ref{sec:struct_suff}: arguments are strong to the extent that incoming attacks are offset by available defense
in the instantiated structure.

The GRASP operator is independent of any specific graph-construction procedure. Different
choices of $W$ and $D$ encode different assumptions about what counts as attack and defense;
once these matrices are specified, \textsc{GRASP} provides a deterministic structural
aggregation rule.

\section{Convergence of the GRASP Operator}
\label{sec:grasp_dynamics}

We now analyze the dynamics induced by the \textsc{GRASP} update rule from
Section~\ref{sec:grasp_operator}. The iteration defines a nonlinear operator on argument
strength vectors; its stability, fixed points, and convergence are essential for interpreting
the resulting structural rankings.

In  Appendix~\ref{sec:proof}, we study the GRASP update as a map $G:\mathbb{R}^n\to \mathbb{R}^n$, induced by the non-symmetric weighted attack matrix $W$ and the derived defense matrix $D$. 
The nonlinearity of the operator arises from the interaction between attack aggregation in the denominator and defense propagation in the numerator, reflecting the dialectical coupling between opposition and reinstatement.



\begin{theorem}\label{thm:grasp} Let $\set:=\{s\in\R^d, \|s-1\|_\infty\le 1\}$ and let $G:\set\to\R^d$ be defined elementwise by $ G(s)_i = \frac{1 + \beta (D^\top s)_i}{1 + \alpha(W^\top s)_i}$. If $W,D$ have non-negative entries and
    \[
    \alpha\le\frac{1}{4\|W\|_1},
    \qquad
    \beta\le\frac{1}{4\|D\|_1},
    \]
    with $\|A\|_1 := \max_j \sum_i |a_{ij}|$, then $G(\set)\subseteq \set$ and $G$ is a contraction on $\set$. Consequently, $G$ admits a unique fixed point $s^*\in\set$, and the iteration $s_{k+1}=G(s_k)$ converges to $s^*$. The result extends to the damped variant in Eq.~\ref{eq:grasp-operator}.
\end{theorem}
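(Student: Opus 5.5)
The plan is a direct application of the Banach fixed-point theorem on the complete metric space $(\set,\|\cdot\|_\infty)$. Note that $\set=\{s:\ 0\le s_i\le 2\ \forall i\}$ is a closed box, hence complete and convex. It then suffices to establish two facts: (i) $G(\set)\subseteq\set$, and (ii) a Lipschitz bound $\|G(s)-G(t)\|_\infty\le L\|s-t\|_\infty$ with $L<1$.

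\textbf{Key norm pairing.} The one bound used throughout is
\[
|(W^\top v)_i| = \Bigl|\sum_k W_{ki} v_k\Bigr| \le \Bigl(\max_i \sum_k |W_{ki}|\Bigr)\|v\|_\infty = \|W\|_1\|v\|_\infty .
\]
So $\|W^\top v\|_\infty\le \|W\|_1\|v\|_\infty$, and the same holds for $D$. This is exactly why the column-sum norm $\|\cdot\|_1$ appears in the hypothesis. Getting this pairing right is the main point to check carefully; everything else is elementary.

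\textbf{Invariance.} For $s\in\set$ we have $s\ge 0$ and $\|s\|_\infty\le 2$. Since the entries of $W$ and $D$ are non-negative, the hypotheses give $0\le \alpha (W^\top s)_i\le 2\alpha\|W\|_1\le \tfrac12$ and likewise $0\le\beta(D^\top s)_i\le\tfrac12$. Writing $N_i(s)=1+\beta(D^\top s)_i\in[1,\tfrac32]$ and $Q_i(s)=1+\alpha(W^\top s)_i\in[1,\tfrac32]$, we get $G(s)_i=N_i/Q_i\in[\tfrac23,\tfrac32]\subset[0,2]$. Hence $G(\set)\subseteq\set$.

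\textbf{Contraction.} For $s,t\in\set$, split the difference of quotients as
\[
G(s)_i-G(t)_i=\frac{N_i(s)-N_i(t)}{Q_i(s)}+N_i(t)\,\frac{Q_i(t)-Q_i(s)}{Q_i(s)Q_i(t)} .
\]
The bounds needed for the two terms are:
\begin{itemize}
\item $|N_i(s)-N_i(t)|\le\beta\|D\|_1\|s-t\|_\infty\le\tfrac14\|s-t\|_\infty$;
\item $|Q_i(s)-Q_i(t)|\le\tfrac14\|s-t\|_\infty$;
\item $Q_i\ge 1$ and $N_i(t)\le\tfrac32$.
\end{itemize}
Together these give $|G(s)_i-G(t)_i|\le(\tfrac14+\tfrac38)\|s-t\|_\infty=\tfrac58\|s-t\|_\infty$. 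So $G$ is a contraction with $L=5/8$, and Banach's theorem yields a unique fixed point $s^*\in\set$. Moreover, $s_{k+1}=G(s_k)$ converges to $s^*$ geometrically at rate $(5/8)^k$ from any $s_0\in\set$.

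\textbf{Damped variant.} For $\gamma\in(0,1]$, the map $\widehat G(s)=(1-\gamma)s+\gamma G(s)$ maps $\set$ into itself, because $\set$ is convex and both $s$ and $G(s)$ lie in $\set$. Its Lipschitz constant is at most $(1-\gamma)+\tfrac58\gamma<1$. Fixed points of $\widehat G$ coincide with those of $G$, so the damped iteration converges to the same $s^*$.
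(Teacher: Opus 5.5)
Your proof is correct and takes the paper's route: invariance of the box $\set$, then an $\ell_\infty$ Lipschitz bound on the quotient via $\|W^\top v\|_\infty\le\|W\|_1\|v\|_\infty$, then Banach. The differences are refinements only: you use $s\ge 0$ to get $G(\set)\subseteq[\tfrac23,\tfrac32]^d$ and a one-sided quotient split, which gives constant $5/8$ instead of the paper's $3/4$ (the paper uses symmetric averaging and $\|G(s)\|_\infty\le 2$), and you check invariance of the damped map by convexity, which the paper's damping remark leaves implicit.
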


 Our proof in Appendix~\ref{sec:proof} uses standard tools but applies them to a non-standard and non-linear setup. Given the worst-case nature of our analysis, we note that the coefficients $\alpha$ and $ \beta$ suggested by the proof might not yield the best-performing results in general, and hence treat them as tuning parameters in our experiments~(see e.g. Section~\ref{sec:agreement}).

\section{Structural Sufficiency}
\label{sec:struct_suff}

The GRASP operator introduced in the previous section computes a stable equilibrium of
argument strengths from explicit interaction relations. We now clarify the notion of
strength that this equilibrium represents. In our setting, strength does not refer to truth,
persuasiveness, rhetorical quality, or human preference, but to structural robustness: the
extent to which an argument withstands the explicit attacks present in the instantiated
interaction graph.

Within the argumentation literature, the closest conceptual analogue is \emph{global
sufficiency}~\cite{gurcke-etal-2021-assessing,cohen2001evaluating,damer1980attacking}: the idea that an
argument is strong if it adequately withstands opposing arguments. However, global sufficiency
is typically defined relative to attacks that \emph{could be anticipated}, including
hypothetical or implicit counterarguments. In contrast, GRASP operates strictly on the
instantiated interaction structure. This motivates the following graph-relative variant.

\vspace{-2mm}
\paragraph{Structural sufficiency.}
We define \emph{structural sufficiency} as a dialectical notion of argument robustness that
depends only on the \emph{explicit} interaction structure present in a debate. It abstracts
away from truth, rhetorical appeal, persuasion, human preference, and imagined or hypothetical attacks.

Table~\ref{tab:sufficiency-comparison} summarizes the distinction between global sufficiency
and structural sufficiency.

\begin{table}[h]
\centering
\vspace{-3mm}
\small
\setlength{\tabcolsep}{5pt}
\renewcommand{\arraystretch}{1.08}
\begin{tabular}{
p{0.18\linewidth}
p{0.37\linewidth}
>{\columncolor{gray!10}}p{0.37\linewidth}
}
\toprule
 & \textbf{Global sufficiency} & \textbf{Structural sufficiency} \\
\midrule
\textbf{Criterion type}
& Dialectical
& Dialectical \\
\textbf{Scope}
& Actual, anticipated, or implicit attacks
& Explicit attacks in the graph \\
\textbf{Target}
& Robustness in broader discourse
& Robustness in instantiated structure \\
\textbf{Excludes}
& Not necessarily separated from persuasion or context
& Truth, persuasion, style, imagined attacks \\
\bottomrule
\end{tabular}
\caption{Structural sufficiency restricts dialectical evaluation to explicit attacks in the interaction graph, whereas global sufficiency may require addressing anticipated or implicit attacks.}
\vspace{-4mm}
\label{tab:sufficiency-comparison}
\end{table}

We proceed with a few definitions formalizing this notion.

\vspace{-2mm}
\paragraph{Structure.}
An \emph{argumentation structure} is a tuple $\mathcal{G}=(A,R^-,R^+)$, where $A$ is a
finite set of arguments, $R^- \subseteq A\times A$ is an attack relation, and
$R^+ \subseteq A\times A$ is a support relation. We write $(b,a)\in R^-$ as
``$b$ attacks $a$'' and $(c,a)\in R^+$ as ``$c$ supports $a$.''

\vspace{-2mm}
\paragraph{Neutralization.}
Fix an argument $a\in A$ and an attacker $b\in A$ such that $(b,a)\in R^-$. We say that
the attack $(b,a)$ is \emph{structurally neutralized} in $\mathcal{G}$ if there exists at
least one argument $c\in A$ such that $(c,b)\in R^-$. That is, an attack is neutralized
whenever the attacker itself is explicitly attacked within the structure.

Support relations may enable potential defenders, but structural sufficiency does not require
any specific operational treatment of support edges.

\vspace{-2mm}
\paragraph{Structural sufficiency.}
An argument $a$ is \emph{structurally sufficient} in $\mathcal{G}$ if every explicit attack
on $a$ is neutralized:
\[
\mathrm{SS}(a;\mathcal{G})
\;\Longleftrightarrow\;
\forall b \in A,\; (b,a) \in R^{-}
\Rightarrow \exists c \in A \text{ such that } (c,b) \in R^{-}.
\]
All quantification ranges only over arguments explicitly present in $A$. Thus, unlike
\emph{global sufficiency}, structural sufficiency evaluates robustness strictly with respect
to the instantiated interaction structure~\cite{wachsmuth2017computational}.

\vspace{-2mm}

\paragraph{Axioms of Structural Sufficiency.}
The definitions above imply four minimal desiderata for any interaction-based robustness
criterion based on structural sufficiency:

\vspace{-1mm}
\begin{center}
\small
\setlength{\tabcolsep}{5pt}
\renewcommand{\arraystretch}{1.18}
\begin{tabular}{p{0.30\linewidth}p{0.66\linewidth}}
\toprule
\textbf{Axiom} & \textbf{Statement} \\
\midrule
\textbf{S1: Attack Sensitivity}
& Unneutralized attacks invalidate sufficiency. \newline
\((b,a)\in R^- \land \neg\exists c\in A : (c,b)\in R^-
\Rightarrow \neg \mathrm{SS}(a;\mathcal{G})\). \\

\textbf{S2: Defense Reinstatement}
& Attacking an attacker restores sufficiency with respect to that attack. \newline
\((c,b)\in R^- \land (b,a)\in R^-
\Rightarrow (b,a)\text{ is neutralized}\). \\

\textbf{S3: Structural Locality}
& Only structurally connected arguments affect sufficiency. \newline
\(\neg\exists\) directed path \(x\leadsto a\) in \((A,R^-\cup R^+)
\Rightarrow x\) has no effect on \(\mathrm{SS}(a;\mathcal{G})\). \\

\textbf{S4: Baseline Sufficiency}
& Arguments without attackers are sufficient by default. \newline
\(\neg\exists b\in A \;\text{s.t. } (b,a)\in R^-
\Rightarrow \mathrm{SS}(a;\mathcal{G})\). \\
\bottomrule
\end{tabular}
\end{center}
\vspace{-1mm}

Together, these axioms define a Boolean robustness criterion: an argument is sufficient iff all its explicit attackers are countered in the graph. While conceptually simple, this
criterion does not support graded comparison. \textsc{GRASP} uses the same primitives---explicit attacks, defense, and structural
locality---but aggregates them through weighted propagation and normalization. In this sense, \textsc{GRASP} is a graded, propagation-based analogue of the structural sufficiency intuition, yielding continuous strength scores rather than binary structural sufficiency labels.

\section{Experiments}
\label{sec:experiments}

\subsection{\textsc{StructDebate}}
\label{sec:structdebate}

We introduce \textsc{StructDebate}, a controlled debate dataset designed to study
structural argument ranking under explicitly instantiated interaction regimes. Its purpose is not to model persuasion, but to isolate attack--defense structure from
rhetorical and stylistic variation in a reproducible setting.

\begin{wraptable}{r}{0.50\textwidth}
\centering
\small
\vspace{-4mm}
\setlength{\tabcolsep}{4pt}
\resizebox{\linewidth}{!}{
\begin{tabular}{lcc}
\toprule
 & \textbf{Pool} & \textbf{Multi-turn} \\
\midrule
\# Debates                & 50      & 250 \\
\# Arguments              & 2{,}000 & 5{,}000 \\
Arguments / debate        & 40.0    & 20.0 \\
Mean length (tokens)      & 42.5    & 94.9 \\
Turns / debate            & --      & 10 \\
\bottomrule
\end{tabular}
}
\caption{Summary statistics for \textsc{StructDebate}.}
\label{tab:dataset_stats}
\end{wraptable}

\textsc{StructDebate} contains machine-generated arguments grounded in 50 real-world motions
sampled from the public DebateData.io corpus.\footnote{\url{https://debatedata.io/}}
The motions span public policy, economics, technology, law, and ethics, and are phrased as
binary propositions (Pro/Con) suitable for adversarial debate. Arguments are generated using five
LLMs---\texttt{openai/gpt-5.2-pro}~\citep{openai2025gpt52systemcard},
\texttt{anthropic/claude-opus-4.5}~\citep{anthropic2025claudeopus45systemcard},
\texttt{mistralai/mistral-small-creative}~\citep{mistral2025smallcreative},
\texttt{qwen/qwen3-max}~\citep{qwen2025qwen3max}, and
\texttt{x-ai/grok-4}~\citep{xai2025grok4modelcard}---which are used only as generators,
not as judges.

We generate arguments in two settings. In the \emph{pool} setting (P), arguments are generated
independently for each motion, side, and semantic angle. In the \emph{multi-turn} setting (MT),
arguments are generated sequentially in 10-turn self-debates with alternating
\textsc{Pro}/\textsc{Con} turns. Each argument is associated with one of six semantic
angles: \textsc{Economic}, \textsc{Legal}, \textsc{Moral}, \textsc{Political},
\textsc{Social}, and \textsc{Technological}. The rationale for the six semantic angles and the full list of 50 motions are provided in Appendix~\ref{app:dataset-details}.

Overall, \textsc{StructDebate} contains 7{,}000 arguments across 300 debates and is balanced
by stance and semantic angle.

\subsection{Inter-Model Agreement via Structural Aggregation}
\label{sec:agreement}

A central motivation for structural evaluation is that judge models often produce
inconsistent argument rankings when operating directly on raw text. We test whether replacing
holistic ranking with pairwise interaction scoring followed by deterministic structural
aggregation yields more stable rankings across models.

\vspace{-3mm}

\paragraph{Constructing the attack graph.}
For each debate, we instantiate a fully connected directed graph over arguments.
Inspired by natural language inference (NLI), each judge LLM is prompted to score every
ordered pair $(a_i,a_j)$ by how strongly argument $a_i$ contradicts or attacks argument
$a_j$.
The resulting score is treated as a weighted attack from $a_i$ to $a_j$, yielding a dense
attack matrix $W$.

Each judge model independently produces its own $W$ using the same prompting template.
\textsc{GRASP} then converts each $W$ into a global ranking by computing argument strengths
from the induced interaction graph and sorting arguments by their final scores.

Appendix~\ref{app:idebate_sanity} provides an external sanity check showing that this pairwise scoring prompt recovers human-authored point--counterpoint relations from iDebate/IDEA pages.

\vspace{-3mm}

\paragraph{Baselines and judge models.}
\label{sec:baseline_models}
We compare against \textsc{RAW}, where a judge model directly ranks the arguments from the
full debate text. To test whether direct LLM judging improves when given explicit structural guidance, we include \textsc{RAW+SS}, a strengthened baseline that provides the judge with a definition of structural sufficiency before asking for a ranking. This tests whether the instability of \textsc{RAW} rankings can be mitigated by clarifying the target criterion alone. Exact prompt details are provided in Appendix~\ref{prompt:raw_ss}.

We use six judge models: \texttt{anthropic/claude-haiku-4.5}~\citep{anthropic2025claudehaiku45},
\texttt{deepseek/deepseek-v3.2}~\citep{liu2025deepseek},
\texttt{google/gemini-3-flash-preview}~\citep{google2025gemini3flash},
\texttt{meta-llama/llama-4-scout}~\citep{meta2025llama4},
\texttt{openai/gpt-5.2-chat}~\citep{openai2025gpt52systemcard}, and
\texttt{xiaomi/mimo-v2-flash}~\citep{xiao2026mimo}. In \textsc{GRASP}, judge models only score local attack relations; the final ranking is computed by structural aggregation.
\vspace{-7mm}

\paragraph{GRASP variants.}
For the main experiments, we instantiate defense as \(D=W^2=W.W\). This is not required by the
GRASP operator; it is the default graph-derived choice for attack-only debate graphs. A
two-hop path \(a_k \rightarrow a_i \rightarrow a_j\) means that \(a_k\) attacks an attacker
of \(a_j\), and therefore contributes to defending \(a_j\). We evaluate alternative choices
of \(D\) in the synthetic structural testbed in Appendix~\ref{sec:synthetic_structural}.

We evaluate five variants that differ only in preprocessing. \textbf{GRASP} uses \(W\) as produced by the judge and \(D=W^2\).
\textbf{GRASP-W\(_\infty\)} and \textbf{GRASP-W\(_1\)} apply global \(L_\infty\) or \(L_1\)
normalization to \(W\) before computing \(D\). The \(+\bar D\) variants additionally
rescale the induced defense matrix. All variants use the same update rule.

All runs initialize strengths uniformly with \(s^{(0)}=\mathbf{1}\). Unless otherwise
stated, hyperparameters are fixed \emph{a priori} to \(\alpha=1.0\), \(\beta=0.6\), and
damping \(\gamma=0.9\); sensitivity analysis over \((\alpha,\beta,\gamma)\) is reported in
Appendix~\ref{app:gridsearch}.

\vspace{-3mm}

\paragraph{Agreement metrics.}
Within each debate, we compute pairwise agreement between judge rankings using Kendall's
$\tau$, Spearman's $\rho$, normalized Kendall swap distance, and Top-$k$ overlap, and
average the resulting scores across debates.
\vspace{-3mm}

\paragraph{Results.}
Table~\ref{tab:agreement_settingwise} shows that GRASP-based structural aggregation more
than doubles inter-model agreement relative to RAW rankings in both pool and multi-turn
settings. Providing judges with the definition of structural sufficiency alone
(\textsc{RAW+SS}) does not resolve this instability: it slightly improves Pool agreement but substantially degrades Multi-turn agreement and top-\(k\) overlap. In contrast, the base GRASP and GRASP-W$_\infty$ variants achieve the strongest and most stable performance ($\tau \approx 0.62$), indicating that the improvement comes from explicit structural aggregation rather than from simply clarifying the evaluation criterion.

\begin{table}[t]
\centering
\small
\setlength{\tabcolsep}{5pt} 
\aboverulesep=0ex
\belowrulesep=0ex
\resizebox{\columnwidth}{!}{%
\begin{tabular}{l cc cc cc cc cc}
\toprule
& \multicolumn{2}{c}{$\tau$ {\footnotesize ($+\Delta\tau$)} $\uparrow$} & \multicolumn{2}{c}{Swap $\downarrow$} & \multicolumn{2}{c}{$\rho \uparrow$} & \multicolumn{2}{c}{Top-3 $\uparrow$} & \multicolumn{2}{c}{Top-5 $\uparrow$} \\
\cmidrule(lr){2-3} \cmidrule(lr){4-5} \cmidrule(lr){6-7} \cmidrule(lr){8-9} \cmidrule(lr){10-11}
Method & P & MT & P & MT & P & MT & P & MT & P & MT \\
\midrule
RAW                       & 0.337 & 0.309 & 0.331 & 0.345 & 0.425 & 0.380 & 0.385 & 0.410 & 0.417 & 0.474 \\
RAW+SS & 0.393 & 0.064 & 0.303 & 0.468 & 0.405 & 0.038 & 0.208 & 0.251 & 0.235 & 0.298 \\
\rowcolor{gray!15}
\textbf{GRASP}     & \textbf{0.623} {\scriptsize (+.286)} & \textbf{0.626} {\scriptsize (+.317)} & 0.189 & \textbf{0.187} & 0.780 & \textbf{0.779} & 0.509 & 0.487 & 0.574 & 0.619 \\
GRASP-W$_\infty$          & \textbf{0.623} {\scriptsize (+.286)} & \textbf{0.626} {\scriptsize (+.317)} & \textbf{0.188} & \textbf{0.187} & \textbf{0.781} & \textbf{0.779} & 0.509 & 0.486 & 0.574 & 0.620 \\
GRASP-W$_1$               & 0.604 {\scriptsize (+.267)} & 0.607 {\scriptsize (+.298)} & 0.198 & 0.197 & 0.772 & 0.768 & 0.528 & 0.525 & 0.587 & 0.634 \\
GRASP-W$_\infty$+$\bar{D}$ & 0.591 {\scriptsize (+.254)} & 0.593 {\scriptsize (+.284)} & 0.205 & 0.204 & 0.761 & 0.755 & 0.530 & 0.528 & 0.584 & 0.632 \\
GRASP-W$_1$+$\bar{D}$      & 0.580 {\scriptsize (+.243)} & 0.575 {\scriptsize (+.266)} & 0.210 & 0.212 & 0.752 & 0.740 & \textbf{0.535} & \textbf{0.542} & \textbf{0.593} & \textbf{0.635} \\
\bottomrule
\end{tabular}
}
\vspace{2mm}
\caption{Setting-wise inter-model agreement for Pool (P) and Multi-turn (MT). Improvements (\(\Delta\tau\)) over RAW are shown in parentheses. \textbf{Bold} indicates best performance; \colorbox{gray!20}{gray} is the default GRASP variant.}
\vspace{-9mm}
\label{tab:agreement_settingwise}
\end{table}

\subsection{Consensus Divergence}
\label{sec:consensus_divergence}

\begin{wraptable}{r}{8.2cm}
\centering
\small
\setlength{\tabcolsep}{3pt}
\vspace{-4mm}
\aboverulesep=0ex
\belowrulesep=0ex
\resizebox{8cm}{!}{%
\begin{tabular}{l cc cc l}
\toprule
\multirow{2}{*}{Method} & \multicolumn{2}{c}{Borda $\downarrow$} & \multicolumn{2}{c}{Kemeny $\downarrow$} & \multirow{2}{*}{Outlier (P)} \\
\cmidrule(lr){2-3} \cmidrule(lr){4-5}
 & P & MT & P & MT & \\
\midrule
RAW                       & 204.2          & 48.7          & 204.8          & 49.2          & G-5.2 Chat \\
RAW+SS & 3597.2 & 90.1 & 3574.2 & 92.0 & DS-v3.2 \\
\rowcolor{gray!15}
\textbf{GRASP}            & \textbf{116.3} & \textbf{27.2} & \textbf{116.5} & \textbf{27.3} & L-4 Scout \\
GRASP-W$_\infty$          & 116.4          & \textbf{27.2} & 116.6          & \textbf{27.3} & L-4 Scout \\
GRASP-W$_1$               & 123.8          & 28.6          & 124.0          & 28.8          & L-4 Scout \\
GRASP-W$_\infty$+$\bar{D}$ & 129.1          & 29.4          & 128.9          & 29.5          & L-4 Scout \\
GRASP-W$_1$+$\bar{D}$      & 133.0          & 30.8          & 132.9          & 30.9          & L-4 Scout \\
\bottomrule
\end{tabular}
}
\caption{\textbf{Consensus divergence.} Largest mean Kendall distance from any judge to the
consensus ranking. Lower is better. Gray shading marks the default GRASP variant. P and MT
denote Pool and Multi-turn settings. Outliers \textit{G-5.2 Chat}, \textit{DS-v3.2} and \textit{L-4 Scout}
refer to \texttt{openai/gpt-5.2-chat}, \texttt{deepseek/deepseek-v3.2} and \texttt{meta-llama/llama-4-scout}.}
\vspace{-8mm}
\label{tab:consensus_divergence}
\end{wraptable}

Beyond pairwise agreement, we study how far individual judge models deviate from a global
consensus ranking. For each debate and aggregation method, we form two consensus rankings
across judges: (i) \emph{Borda} aggregation and (ii) a greedy approximation to the
\emph{Kemeny}-optimal ranking. For each judge model, we compute its Kendall swap distance to
the corresponding consensus ranking. We then report, for each setting and method, the judge
model with the largest mean distance to consensus.

\paragraph{Results.}
Table~\ref{tab:consensus_divergence} shows that all GRASP variants substantially reduce
worst-case divergence from consensus compared to RAW rankings. The contrast is even sharper
against \textsc{RAW+SS}: although the judge is explicitly told to rank by structural
sufficiency, its farthest-from-consensus model exhibits substantially larger divergence,
especially in the pool setting. Across both pool and multi-turn debates, the most divergent
GRASP judge is roughly \(2\times\) closer to consensus than the most divergent RAW judge and
far closer than the most divergent \textsc{RAW+SS} judge. This suggests that agreement at the level of local pairwise interactions translates not only into higher average agreement, but also into a more robust global ranking: even the farthest-from consensus judge remains substantially closer to the aggregate ranking under GRASP.

\subsection{Geometry of Attack Graphs and GRASP Dynamics}
\label{sec:graph_geometry}

\begin{wrapfigure}{r}{0.60\textwidth}
\centering
\vspace{-4mm}
\includegraphics[width=0.58\textwidth]{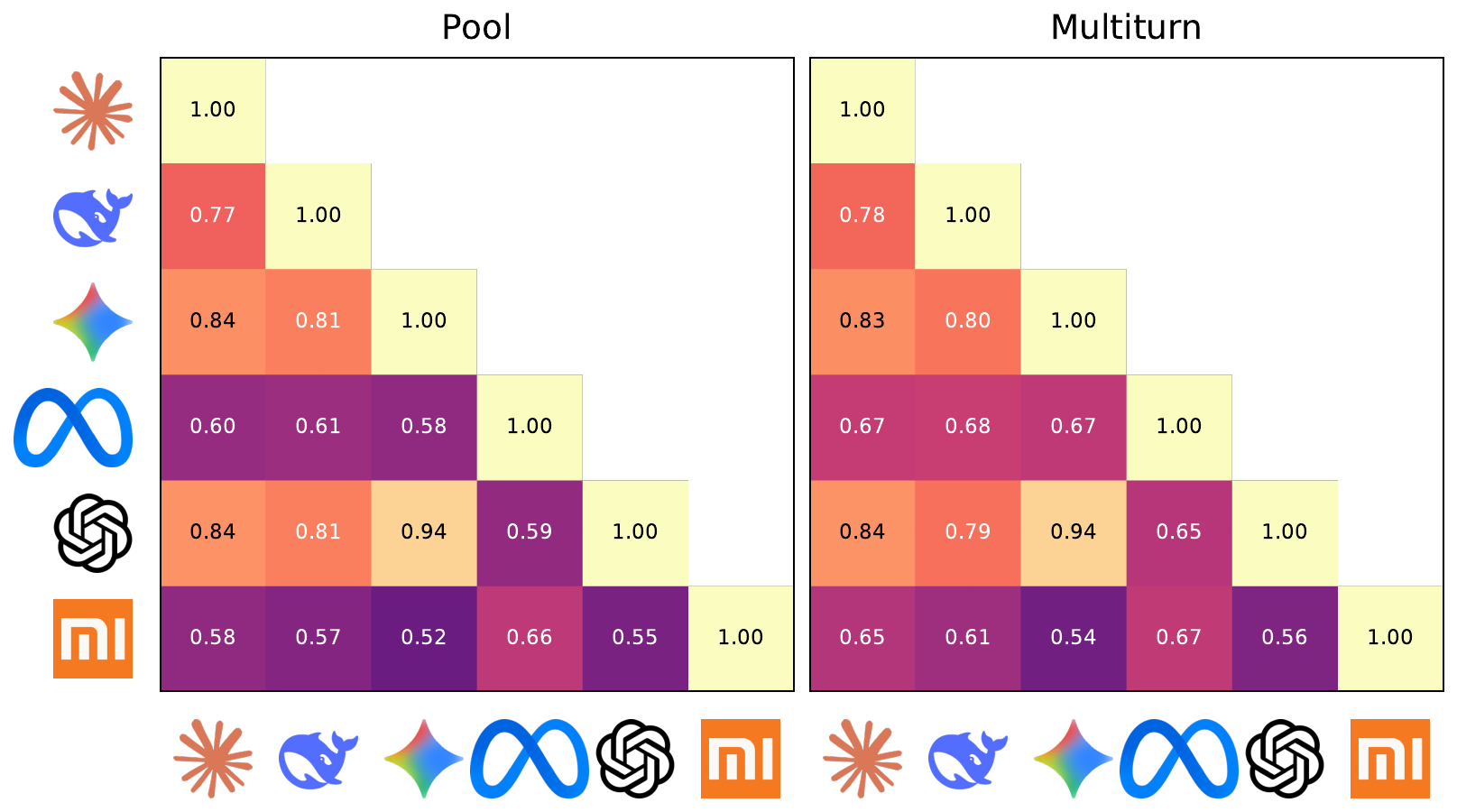}
\caption{
Pairwise mean Pearson correlation between attack-weight matrices \(W\) for Pool
(left) and Multi-turn (right) settings.
}
\label{fig:graph_similarity_heatmap}
\vspace{-8mm}
\end{wrapfigure}

A central premise of GRASP is that local interaction judgments provide a more stable
substrate than holistic rankings. Figure~\ref{fig:graph_similarity_heatmap} supports this
premise: different judge models induce strongly correlated attack-weight matrices \(W\), with
most pairwise Pearson correlations lying between \(0.55\) and \(0.95\) across both pool and
multi-turn settings. The qualitative structure of the similarity matrix is consistent across
settings, indicating that judges largely agree on the relative pattern of adversarial
relations even when they differ in absolute edge magnitudes.

\begin{figure*}[h!]
\centering
\includegraphics[width=\textwidth]{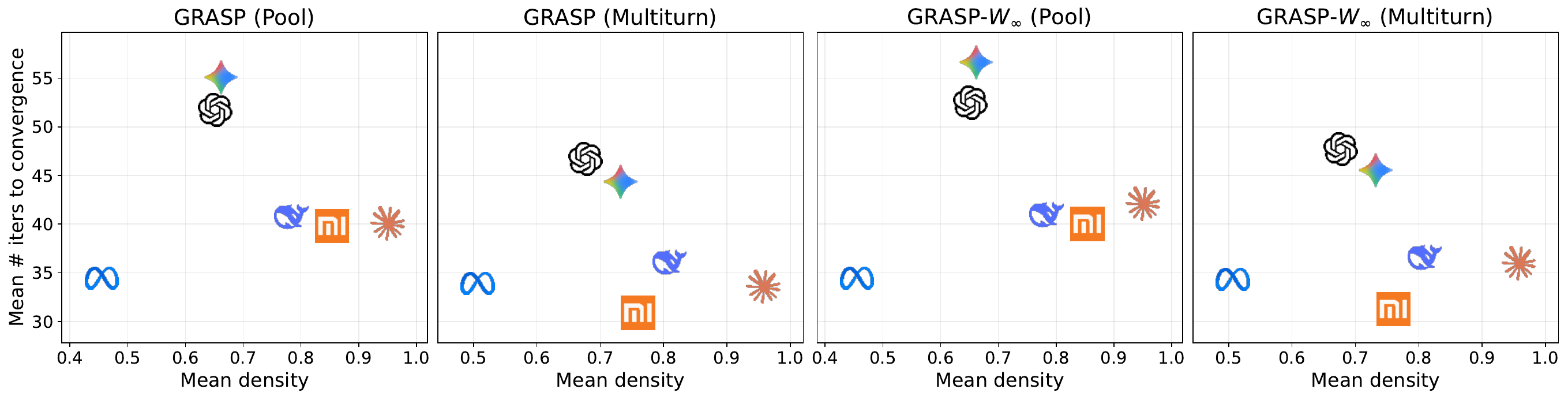}
\caption{Convergence vs.\ attack-graph density for GRASP and GRASP-\(W_\infty\).
Each logo denotes one judge model, averaged across debates.}
\label{fig:convergence}
\vspace{-3mm}
\end{figure*}

Figure~\ref{fig:convergence} relates this interaction geometry to GRASP dynamics. Although
convergence is not a simple function of graph density, judge models occupy persistent regions
of the density--iteration plane. These relative placements are preserved between GRASP and
GRASP-\(W_\infty\), suggesting that global normalization changes scale without materially
altering the underlying interaction regime.

Together, Figures~\ref{fig:graph_similarity_heatmap} and~\ref{fig:convergence} help explain
why structural aggregation improves agreement: GRASP aggregates a local interaction geometry that is substantially more reproducible than holistic rankings, and its dynamics remain stable under simple normalization.

\vspace{-2mm}

\subsection{Case Study: Graph Structure and Rank Dynamics in a Single Debate}
\label{sec:case_study}

We present a qualitative case study on \texttt{mt\_048\_x-ai\_\_grok-4}, the 48th
multi-turn debate in \textsc{StructDebate}, whose arguments were generated by
\texttt{x-ai/grok-4}. The motion is:
\emph{``This House would break up dominant technology monopolies.''}
Whereas Section~\ref{sec:graph_geometry} analyzes aggregate similarity of dense attack
matrices across debates, this case study examines a single debate after high-confidence
thresholding. The goal is to illustrate two complementary points: (i) dense attack-weight
matrices can be broadly similar while their thresholded high-confidence graphs differ, and
(ii) GRASP's iterative updates translate such graph structure into non-trivial rank dynamics.
The texts of the referenced arguments are provided in Appendix~\ref{app:case_texts}.

\paragraph{Thresholded graph structure across judge models.}
Figure~\ref{fig:same_graph} visualizes high-confidence attack graphs
(\(W_{ij}>\tau\), \(\tau=0.6\)) for this debate under six judge models. While all graphs
are constructed from the same argument set, their high-confidence structures differ in both
connectivity and organization. Some models yield dense high-confidence graphs
(\(d \approx 0.95\)--\(0.98\)), whereas others produce sparser graphs
(\(d \approx 0.5\)--\(0.65\)). The mean off-diagonal attack strength \(\mu\) also varies
substantially, indicating that models differ not only in how many attacks exceed the
threshold, but also in the overall strength they assign to pairwise attacks.

These differences arise after thresholding. Thus, the high correlations observed in the dense
attack matrices in Figure~\ref{fig:graph_similarity_heatmap} do not imply identical
high-confidence graph structure: models may agree on the relative pattern of attacks while
differing in which edges they treat as sufficiently strong.

\begin{figure*}[t]
\centering
\includegraphics[width=\linewidth]{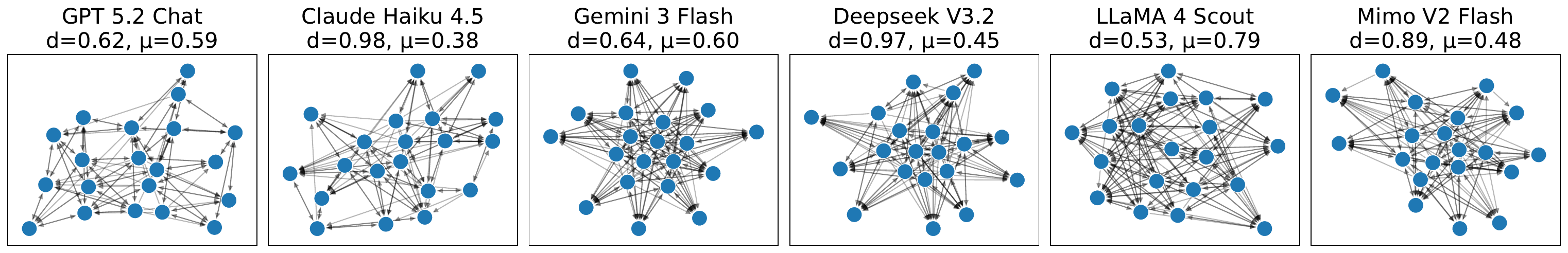}
\caption{Attack graphs for the same debate under different judge models
(\(W_{ij}>\tau\), \(\tau=0.6\)). The visualization displays only thresholded
off-diagonal edges. Each subplot reports the full positive off-diagonal density
\(d = \frac{1}{n(n-1)}\sum_{i\neq j}\mathbb{I}[W_{ij}>0]\), and the mean positive
off-diagonal attack strength
\(\mu = \frac{\sum_{i\neq j} W_{ij}\mathbb{I}[W_{ij}>0]}
{\sum_{i\neq j}\mathbb{I}[W_{ij}>0]}\), both computed before thresholding.}
\vspace{-6mm}
\label{fig:same_graph}
\end{figure*}

\paragraph{Rank dynamics under GRASP.}

\begin{wrapfigure}{r}{0.6\textwidth}
\centering
\vspace{-3mm}
\includegraphics[width=0.58\textwidth]{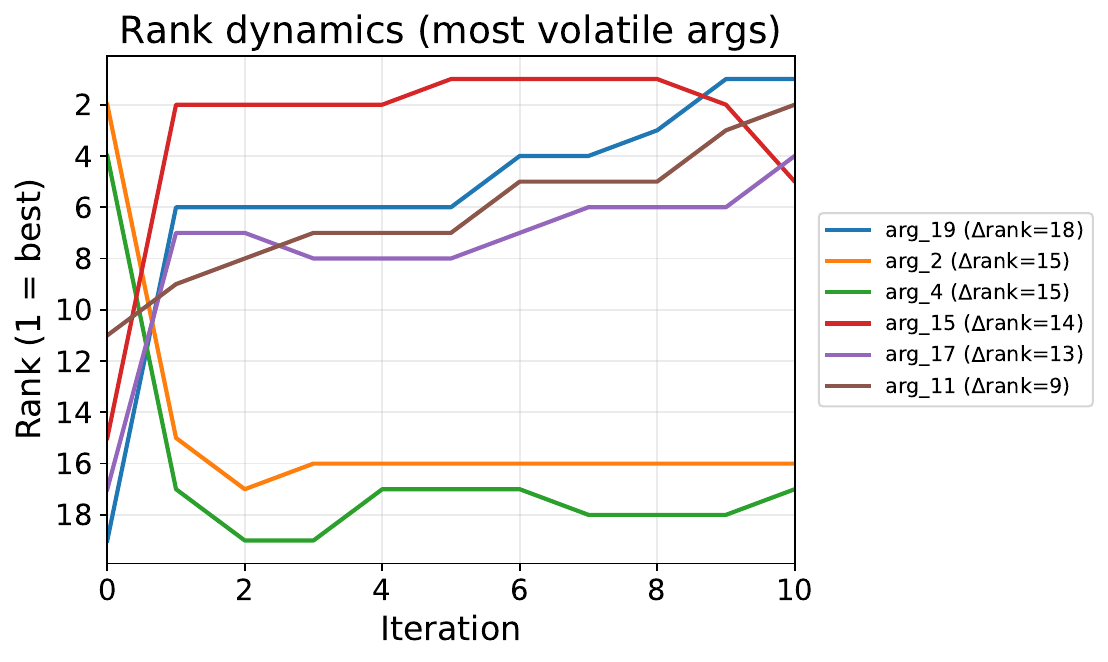}
\caption{Rank dynamics of volatile arguments for \texttt{mt\_048\_x-ai\_\_grok-4}
under GRASP using the attack graph induced by \texttt{openai/gpt-5.2-chat}.}
\label{fig:rank_dynamics}
\vspace{-3mm}
\end{wrapfigure}
Using the attack graph produced by \texttt{openai/gpt-5.2-chat}, we track GRASP scores over
iterations and visualize the rank trajectories of the most volatile arguments
(Figure~\ref{fig:rank_dynamics}, \(\tau=0.5\) for graph construction). Several arguments
undergo large early rank shifts (\(\Delta\)rank \(\approx 10\)--\(18\)) before stabilizing.

The trajectories are not monotonic: some arguments temporarily rise before being demoted,
while others steadily improve. This reflects the coupled nature of the GRASP update, where an
argument's score depends on incoming attacks, available defense, and higher-order interactions
propagated through the graph. Convergence therefore arises from coordinated global
reweighting rather than from a simple local sorting heuristic.

\vspace{-2mm}
\paragraph{Takeaway.}
Dense attack-weight matrices can be broadly correlated across judge models while their
thresholded high-confidence graphs remain visibly different. GRASP operates on these induced structures and produces interpretable rank dynamics,
indicating that its rankings are driven by explicit interactions rather than smoothed
holistic rankings.

\subsection{Case Study: Structural Consensus vs. RAW Disagreement}
\label{sec:case_study_contrast}

We present a representative case in which an argument is ranked near-unanimously near the
top by \textsc{GRASP}, yet receives dispersed and often low rankings under direct
\textsc{RAW} judging. This contrast illustrates how aggregating local pairwise interactions
into an explicit structure can stabilize global priorities, even when individual
judges disagree substantially at the level of holistic evaluation.

\begin{tcolorbox}[
    colback=green!5,
    colframe=green!60!black,
    title=\textbf{This House would require warrants for searches instead of allowing stop-and-frisk.},
    fonttitle=\bfseries,
    after skip=2mm,
    bottom=1mm
]
\small
\begin{tabular*}{\linewidth}{@{\extracolsep{\fill}}ll ll ll@{}}
\textbf{Generator:} & \texttt{openai/gpt-5.2-pro} & \textbf{Stance:} & Pro & \textbf{Angle:} & Pol. \\
\textbf{Turn:} & 4 & \textbf{\# Attk:} & 10 & \textbf{Mean:} & 0.422
\end{tabular*}
\vspace{1mm} \hrule \vspace{2mm}
\textbf{Argument:} \emph{Politically, requiring warrants shifts the authority to search from unilateral street-level discretion to a process that includes independent oversight, reinforcing separation of powers and democratic control over coercive state action. This reduces the risk that search practices become informal policy tools shaped by electoral pressures or internal quotas rather than publicly accountable standards. Clear warrant rules also create more consistent statewide governance, limiting local variations that can undermine legitimacy and deepen political polarization over policing.}
\end{tcolorbox}

\noindent
\begin{minipage}[t]{0.49\textwidth}
    \begin{tcolorbox}[
        colback=blue!5,
        colframe=blue!70!black,
        title=\textbf{GRASP Rankings},
        fonttitle=\bfseries,
        bottom=1mm
    ]
    \footnotesize
    \texttt{anthropic/claude-haiku-4.5}: 1 \\
    \texttt{deepseek/deepseek-v3.2}: 1 \\
    \texttt{google/gemini-3-flash-preview}: 1 \\
    \texttt{meta-llama/llama-4-scout}: 1 \\
    \texttt{openai/gpt-5.2-chat}: 1 \\
    \texttt{xiaomi/mimo-v2-flash}: 2
    \end{tcolorbox}
\end{minipage}
\hfill
\begin{minipage}[t]{0.49\textwidth}
    \begin{tcolorbox}[
        colback=red!5,
        colframe=red!70!black,
        title=\textbf{RAW Rankings},
        fonttitle=\bfseries,
        bottom=1mm
    ]
    \footnotesize
    \texttt{anthropic/claude-haiku-4.5}: 7 \\
    \texttt{deepseek/deepseek-v3.2}: 17 \\
    \texttt{meta-llama/llama-4-scout}: 5 \\
    \texttt{openai/gpt-5.2-chat}: 20 \\
    \texttt{xiaomi/mimo-v2-flash}: 19 \\
    \textit{(Values indicate wide disagreement)}
    \end{tcolorbox}
\end{minipage}

\paragraph{Analysis.}
The same argument ranges from rank \(5\) to rank \(20\) under \textsc{RAW}, making its priority highly dependent on the judge model. Under \textsc{GRASP}, independently induced interaction graphs yield nearly identical ranks \((1,1,1,1,1,2)\)\footnote{\# Attk and Mean represent the number of attackers of the target argument and their corresponding average strength as obtained from the $W$ constructed by the \texttt{openai/gpt-5.2-chat} judge.}. This matters sociotechnically because model-dependent ranking variation can shape which arguments are surfaced, ignored, or acted upon in downstream deliberative settings. Structural aggregation reduces this arbitrariness by making the basis for prioritization explicit, inspectable, and  tied to the induced attack--defense graph rather than to an opaque holistic verdict.

Additional ablations, prompt specifications, \textsc{GRASP} pseudocode, and analyses of the
relationship between \textsc{GRASP} scores and human convincingness are provided in the Appendix.

\section{Conclusion and Discussion}

We introduced \textbf{\textsc{GRASP}}, a structural aggregation framework that ranks
arguments from pairwise interaction patterns. When LLM judges prioritize arguments,
inter-model variation determines which arguments are surfaced or ignored. \textsc{GRASP}
reduces this arbitrariness by replacing opaque holistic ranking with an explicit pathway
from local attack--defense judgments to global structural strength.

Unlike quality-aware or persuasion-oriented models, \textsc{GRASP} scores arguments by how
they are attacked and defended within the explicit structure, not by truth, rhetorical
appeal, or human convincingness. This positions \textsc{GRASP} as a complementary audit
layer: deliberation systems can use convincingness models to surface persuasive arguments
while flagging cases where structural rankings diverge sharply, marking arguments whose
appeal outpaces their dialectical support.

Empirically, \textsc{GRASP} improves inter-model agreement, reduces worst-case divergence
from consensus, and yields consistent rankings across independently constructed attack
graphs. Its weak association with human convincingness labels reinforces the central
distinction: structural robustness and persuasive success are different evaluation targets.
This matters for auditable LLM-assisted deliberation, where prioritization should be
inspectable rather than hidden inside a model-specific holistic verdict.

\paragraph{Limitations and Future Work.}
\textsc{GRASP} currently requires \(O(n^2)\) pairwise interaction scoring, uses a
graph-derived defense construction whose suitability may vary by domain, and has limited
human validation of structural sufficiency as a distinct annotation target. Future work
includes scalable edge filtering, broader defense formulations, human studies separating
structural robustness from persuasiveness, reward-model applications, and streaming
interaction graphs.

\section*{Acknowledgements}
This research was funded by the Max Planck \& Amazon Science Hub. The authors would further like to thank Hilde Kuehne, Naomi Saphra, Bernhard Schölkopf, Moritz Hardt and Victor May for their invaluable feedback in helping shape this idea into a paper. Antonio Orvieto acknowledges the financial support of the Hector Foundation.

\clearpage
\bibliographystyle{plainnat}
\bibliography{references}

\clearpage
\appendix

\section*{\huge\makebox[\linewidth][c]{Appendix}}
\phantomsection
\addcontentsline{toc}{section}{Appendix}

\vspace{1em}
\section*{Table of Contents}
\phantomsection

\begingroup
\small
\setlength{\parskip}{2pt}
\setlength{\parindent}{0pt}

\hyperref[app:related_work]{A \quad Related Work}\dotfill\pageref{app:related_work}

\hspace{1.5em}\hyperref[app:rw_gradual]{A.1 \quad Gradual semantics and argument ranking}\dotfill\pageref{app:rw_gradual}

\hspace{1.5em}\hyperref[app:rw_mining]{A.2 \quad Argument ranking semantics versus argument mining}\dotfill\pageref{app:rw_mining}

\hspace{1.5em}\hyperref[app:rw_quality]{A.3 \quad Argument quality, persuasion, and sufficiency}\dotfill\pageref{app:rw_quality}

\hspace{1.5em}\hyperref[app:rw_llm]{A.4 \quad LLMs for argumentation and judging}\dotfill\pageref{app:rw_llm}

\hspace{1.5em}\hyperref[app:rw_positioning]{A.5 \quad Positioning}\dotfill\pageref{app:rw_positioning}

\vspace{0.35em}
\hyperref[app:impact]{B \quad Impact Statement}\dotfill\pageref{app:impact}

\hspace{1.5em}\hyperref[app:impact_positive]{B.1 \quad Positive impacts}\dotfill\pageref{app:impact_positive}

\hspace{1.5em}\hyperref[app:impact_risks]{B.2 \quad Risks and limitations}\dotfill\pageref{app:impact_risks}

\hspace{1.5em}\hyperref[app:impact_mitigations]{B.3 \quad Mitigations}\dotfill\pageref{app:impact_mitigations}

\vspace{0.35em}
\hyperref[sec:proof]{C \quad Proofs}\dotfill\pageref{sec:proof}

\hspace{1.5em}\hyperref[app:proofs_prelim]{C.1 \quad Preliminaries}\dotfill\pageref{app:proofs_prelim}

\hspace{1.5em}\hyperref[app:proofs_convergence]{C.2 \quad GRASP convergence}\dotfill\pageref{app:proofs_convergence}

\vspace{0.35em}
\hyperref[sec:app_example]{D \quad Illustrative Example: Dynamic Ranking Shift and Convergence}\dotfill\pageref{sec:app_example}

\vspace{0.35em}
\hyperref[sec:synthetic_structural]{E \quad Structural Evaluation on Synthetic Graphs}\dotfill\pageref{sec:synthetic_structural}

\hspace{1.5em}\hyperref[app:struct_archetypes]{E.1 \quad Structural archetypes}\dotfill\pageref{app:struct_archetypes}

\hspace{1.5em}\hyperref[app:struct_crc]{E.2 \quad Critical ranking conditions}\dotfill\pageref{app:struct_crc}

\hspace{1.5em}\hyperref[subsec:struct_methods]{E.3 \quad Methods and metrics}\dotfill\pageref{subsec:struct_methods}

\hspace{1.5em}\hyperref[app:struct_results]{E.4 \quad Results}\dotfill\pageref{app:struct_results}

\vspace{0.35em}
\hyperref[app:ablations]{F \quad Additional Diagnostic Analyses}\dotfill\pageref{app:ablations}

\hspace{1.5em}\hyperref[sec:centrality]{F.1 \quad Centrality alignment and structural centers}\dotfill\pageref{sec:centrality}

\hspace{1.5em}\hyperref[sec:w_vs_rank_similarity]{F.2 \quad Structural similarity vs. ranking similarity}\dotfill\pageref{sec:w_vs_rank_similarity}

\hspace{1.5em}\hyperref[app:graph_geometry]{F.3 \quad Attack-graph geometry across debates}\dotfill\pageref{app:graph_geometry}

\hspace{1.5em}\hyperref[app:angle_agreement]{F.4 \quad Angle-level agreement analysis}\dotfill\pageref{app:angle_agreement}

\hspace{1.5em}\hyperref[app:idebate_sanity]{F.5 \quad External Sanity Check: Recovering Human-Written Point--Counterpoint Relations}\dotfill\pageref{app:idebate_sanity}

\vspace{0.35em}
\hyperref[app:dataset-details]{G \quad \textsc{StructDebate} Construction Details}\dotfill\pageref{app:dataset-details}

\hspace{1.5em}\hyperref[app:generation_prompts]{G.1 \quad Argument generation prompt schemas}\dotfill\pageref{app:generation_prompts}

\hspace{1.5em}\hyperref[app:motions]{G.2 \quad Debate motions}\dotfill\pageref{app:motions}

\vspace{0.35em}
\hyperref[app:gridsearch]{H \quad Hyperparameter Sensitivity via Cross-Model Agreement}\dotfill\pageref{app:gridsearch}

\vspace{0.35em}
\hyperref[app:prompts]{I \quad Judging Prompts and Prompt Optimization}\dotfill\pageref{app:prompts}

\hspace{1.5em}\hyperref[app:raw_prompt_opt]{I.1 \quad Prompt optimization for RAW rankings}\dotfill\pageref{app:raw_prompt_opt}

\hspace{1.5em}\hyperref[app:prompt_schemas]{I.2 \quad Judging prompt schemas}\dotfill\pageref{app:prompt_schemas}

\vspace{0.35em}
\hyperref[app:case_texts]{J \quad Case Study Details}\dotfill\pageref{app:case_texts}

\vspace{0.35em}
\hyperref[app:contrast_cases]{K \quad Additional Qualitative Case Studies}\dotfill\pageref{app:contrast_cases}

\vspace{0.35em}
\hyperref[sec:ddo]{L \quad Debate Decision Outcomes: Structural Strength vs. Convincingness}\dotfill\pageref{sec:ddo}

\hspace{1.5em}\hyperref[app:ddo_dataset]{L.1 \quad Dataset and filtering}\dotfill\pageref{app:ddo_dataset}

\hspace{1.5em}\hyperref[app:ddo_setup]{L.2 \quad Experimental setup}\dotfill\pageref{app:ddo_setup}

\hspace{1.5em}\hyperref[app:ddo_results]{L.3 \quad Results}\dotfill\pageref{app:ddo_results}

\vspace{0.35em}
\hyperref[app:grasp_pseudocode]{M \quad GRASP Pseudocode}\dotfill\pageref{app:grasp_pseudocode}


\endgroup

\clearpage

\section{Related Work}
\label{app:related_work}

\textsc{GRASP} is primarily situated in gradual and ranking-based semantics for abstract
argumentation. We also discuss related NLP work on argument mining, argument quality, and
LLM-based judging to clarify the boundary of our contribution. Rather than extracting
argumentative structure from text or predicting persuasive quality, \textsc{GRASP} assumes an
explicit interaction graph and studies how its local relations should be aggregated into a
global ranking.

\paragraph{Gradual semantics and argument ranking.}
\label{app:rw_gradual}
Abstract argumentation frameworks (AFs) model arguments as nodes and attacks as directed
edges, with classical semantics defined through accepted \emph{extensions}
\citep{dung1995acceptability,baroni2011introduction}. These semantics are primarily
set-valued: they determine which arguments are jointly acceptable, rather than assigning a
fine-grained strength to each argument. Gradual and ranking-based semantics address this
limitation by assigning arguments numerical or ordinal scores
\citep{cayrol2005graduality,amgoud2013ranking,bonzon2016comparative,baroni2018many}.
Prominent families include categorizer-style propagation methods
\citep{besnard2001logic}, ranking semantics based on attack structure
\citep{amgoud2013ranking,baroni2014input}, and numerical or equilibrium-style approaches
\citep{gabbay2015equilibrium}. Subsequent work extends these ideas to weighted and
quantitative settings~\citep{libman2024abstract}, assumption-based argumentation
\citep{rapberger2025gradual}, extension-derived rankings~\citep{bengel2025extension}, and
power-index-based formulations~\citep{bistarelli2020power}.

\textsc{GRASP} belongs to this ranking-semantics tradition: given an explicit interaction
graph, it maps local attacks and defenses to continuous argument strengths. Its contribution
is a damped attack--defense propagation operator with a contraction-based convergence
guarantee, paired with an empirical study of how such semantics behave when interaction
graphs are induced by LLM judges.

\paragraph{Argument ranking semantics versus argument mining.}
\label{app:rw_mining}
\textsc{GRASP} is an argument ranking semantics, not an argument-mining system. Argument
mining typically addresses the upstream problem of identifying argumentative components and
relations from raw text, including claims, premises, supports, attacks, stance, and discourse
structure~\citep{stab2014identifying,chakrabarty2019ampersand,potash2017towards,
li2025large}. By contrast, ranking semantics assume that an argument graph or weighted
interaction structure is already available, and define how arguments should be scored once
that structure is fixed~\citep{amgoud2013ranking,bonzon2016comparative,baroni2018many}.
This distinction separates two sources of error: graph construction concerns whether the
edges in \(W\) and \(D\) are correct, while ranking semantics concerns how a fixed graph is
aggregated into global strengths.

This distinction also makes \textsc{GRASP} corpus-agnostic. The arguments may be
human-written or machine-generated, and the interaction graph may be produced by humans,
models, or external tools. Once \(W\) and \(D\) are specified, \textsc{GRASP} provides a
deterministic aggregation rule.

\paragraph{Argument quality, persuasion, and sufficiency.}
\label{app:rw_quality}
A large body of NLP work evaluates arguments by convincingness, persuasiveness, or intrinsic
quality~\citep{toledo2019automatic,wachsmuth2020intrinsic,wachsmuth2017computational,
potash2019ranking,li2020exploring,bozdag2025readsystematicsurveycomputational}. Related
work studies how discourse structure and argument arrangement affect persuasion
\citep{stab2014identifying,mirzakhmedova2023unveiling}. These tasks are complementary to
ours: \textsc{GRASP} does not attempt to predict whether a human will find an argument
convincing, but instead measures structural strength within an explicit interaction graph.

The closest conceptual link is to sufficiency-oriented views of argument quality
\citep{cohen2001evaluating,gurcke-etal-2021-assessing}. Global sufficiency asks whether an
argument adequately addresses counterarguments that could reasonably be anticipated.
Structural sufficiency is deliberately narrower: it evaluates robustness only with respect to
attacks instantiated in the graph. This distinction is central to our framing because
\textsc{GRASP} measures graph-relative structural robustness, not persuasive success.

\paragraph{LLMs for argumentation and judging.}
\label{app:rw_llm}
Recent work explores the use of LLMs for argument-related tasks, including argument mining,
argument quality annotation, debate evaluation, and computational argumentation
\citep{chen2024exploring,li2025large,mirzakhmedova2024large,rescala2024can,
li2024argumentation,sanayei2025can}. However, LLM-based evaluators can be inconsistent or
biased, and neural argument models may exploit shallow cues rather than genuine reasoning
\citep{niven2019probing,taubenfeld2024systematic,stureborg2024large,hu2024explaining}.
\textsc{GRASP} follows a hybrid strategy: LLMs estimate local pairwise interactions, while a
formally defined operator performs global strength propagation. This aligns with work on
inferring attack relations for gradual semantics~\citep{oren2023inferring} and broader
neural--symbolic approaches~\citep{yu2025explain}, but shifts the focus from holistic
judging to auditable structural aggregation.

\paragraph{Positioning.}
\label{app:rw_positioning}
Overall, \textsc{GRASP} should be understood as a structural ranking semantics for explicit
or LLM-induced interaction graphs. It is not a model of truth, persuasion, or human
preference; rather, it provides an explicit pathway from local attack--defense structure to
global argument rankings. Our empirical results support the view that structural robustness
and persuasive effectiveness are distinct, making \textsc{GRASP} complementary to argument
quality and convincingness models.

\section{Impact Statement}
\label{app:impact}

This work studies how argument rankings can be derived from explicit interaction structure
rather than from opaque holistic judgments. By introducing \textsc{GRASP}, we provide an
auditable operator for aggregating local attack--defense relations into global argument
strengths. The intended contribution is methodological: \textsc{GRASP} helps distinguish
structural robustness from persuasion, rhetoric, factuality, or human preference.

\paragraph{Positive impacts.}
\label{app:impact_positive}
\textsc{GRASP} may support tools for analyzing debates, policy discussions, legal reasoning,
scientific claims, and multi-agent deliberation. Because the ranking is computed from an
explicit interaction graph, users can inspect which attacks and defenses contribute to an
argument's score. This may improve transparency in argument evaluation and provide a
complementary signal to persuasion-oriented or quality-aware models.

\paragraph{Risks and limitations.}
\label{app:impact_risks}
\textsc{GRASP} does not measure truth, ethical correctness, factual validity, societal
desirability, or persuasive effectiveness. Misinterpreting structural strength as any of
these properties could lead to inappropriate reliance, especially in high-stakes domains.
Moreover, if the interaction graph is induced by biased, noisy, or low-quality scorers, the
resulting rankings may inherit those errors.

\paragraph{Mitigations.}
\label{app:impact_mitigations}
\textsc{GRASP} should be used as a structural analysis tool, not as a standalone decision
oracle. In practical settings, it should be paired with factual verification, domain expertise,
human review, and explicit auditing of the underlying interaction graph. Its modular design
makes such auditing possible: users can inspect, revise, or replace the local interaction
scores before aggregation.

Overall, the main societal value of \textsc{GRASP} is to clarify what can and cannot be
inferred from argument structure. It offers a transparent way to analyze dialectical
robustness while making clear that structural strength is distinct from truth, persuasion,
and normative correctness.

\section{Proofs}
\label{sec:proof}

We begin by recalling that convergence of iterative schemes is most naturally studied through contraction properties of the underlying operator. In particular, if the GRASP operator is a contraction on a suitable domain, then classical fixed-point results guarantee both existence and uniqueness of an equilibrium, as well as convergence from arbitrary initialization.

\subsection{Preliminaries}
\label{app:proofs_prelim}
We provide here basic results, for a modern reference plase refer to~\cite{bullo2022contraction}.
\begin{definition}[Metric Space] 
    Let $\set$ be a non-empty set, a map $d:\set\times \set\to\R$ is a metric if (1) $d(x,y)=0\iff x=y$, (2) $\forall x,y\in \set, \ d(x,y)=d(y,x)$, (3)  $\forall x,y,z\in \set, \ d(x,y)\le d(x,z)+d(y,z)$.
\end{definition}

\begin{definition}[Cauchy/Convergent Sequences] Let $\{x_k\}_{k\in\N}$ be a sequence in $(\set,d)$. We call $\{x_k\}_{k\in\N}$ Cauchy if for any $\epsilon>0$ there exist $k$ such that, for all $h\in\N$ and $i\ge k$, $d(x_i,x_{i+h})\le\epsilon$. We call $\{x_k\}_{k\in\N}$ convergent to $x^*\in \set$ if for any $\epsilon>0$ there exist $k$ such that $d(x_i,x^*)\le\epsilon$ for all $i\ge k$. 
\end{definition}

\begin{definition}[Complete Metric Space] $(\set,d)$ is complete if every Cauchy sequence in $\set$ converges to a point in $\set$. $(\R^d, \|\cdot\|)$ is complete.
\end{definition}

\begin{definition}[Lipschitz Maps, Contraction]
    Let $(\set,d)$ be a metric space, $G:\set\to \set$ is Lipschitz if there exists a $\ell\ge0$~(Lipschitz constant for $G$) such that for all $x,y\in X$, $d(G(x),G(y))\le\ell d(x,y)$. If $\ell<1$ is possible, $G$ is called a contraction.
\end{definition}

This well-known result is the workhorse of most convergence analyses.
\begin{theorem}[Banach Contraction Theorem] Let $(\set,d)$ be a complete metric space, and $G$ a contraction with factor $\ell$. Then, $G$ has a unique fixed point $s^*\in \set$, and the sequence generated by $s_{k+1}=G(s_k)$ converges to $s^*$.
\label{thm:banach}
\end{theorem}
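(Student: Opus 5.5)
The plan is the classical Picard-iteration argument, using only the definitions above. Fix an arbitrary starting point $s_0\in\set$ and set $s_{k+1}=G(s_k)$. This sequence is well defined because $G$ maps $\set$ into itself, as the definition of a contraction requires. First I would show by induction that consecutive iterates approach each other geometrically. Applying the contraction property to $s_k$ and $s_{k-1}$ gives
\[
d(s_{k+1},s_k)=d(G(s_k),G(s_{k-1}))\le \ell\, d(s_k,s_{k-1}),
\]
and iterating this yields $d(s_{k+1},s_k)\le \ell^k d(s_1,s_0)$.

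Next I would show that $\{s_k\}$ is Cauchy in the sense of the definition above. For $i\ge k$ and $h\in\N$, the triangle inequality applied along the chain $s_i,s_{i+1},\dots,s_{i+h}$ gives
\[
d(s_i,s_{i+h})\le \sum_{m=i}^{i+h-1}\ell^m d(s_1,s_0)\le \frac{\ell^i}{1-\ell}\, d(s_1,s_0).
\]
The right-hand side is uniform in $h$ and tends to $0$ as $i\to\infty$, because $0\le\ell<1$. So for any $\epsilon>0$ I can choose $k$ with $\ell^k d(s_1,s_0)/(1-\ell)\le\epsilon$. Completeness of $(\set,d)$ then gives a limit $s^*\in\set$.

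Then I would show that $s^*$ is a fixed point. A contraction is Lipschitz and therefore continuous, so
\[
d(G(s^*),s^*)\le d(G(s^*),G(s_k))+d(s_{k+1},s^*)\le \ell\, d(s^*,s_k)+d(s_{k+1},s^*).
\]
Both terms on the right tend to $0$, hence $d(G(s^*),s^*)=0$ and $G(s^*)=s^*$ by the first metric axiom.

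Uniqueness is immediate. If $t^*$ is another fixed point, then $d(s^*,t^*)=d(G(s^*),G(t^*))\le\ell\, d(s^*,t^*)$, and since $\ell<1$ this forces $d(s^*,t^*)=0$. Because $s_0$ was arbitrary and the limit is the unique fixed point, every initialization converges to the same $s^*$. The same estimate also gives the rate $d(s_k,s^*)\le \ell^k d(s_1,s_0)/(1-\ell)$.

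There is no serious obstacle here. The points needing care are minor: the Cauchy bound must be uniform in the shift $h$, as the paper's definition requires, and the limit must lie in $\set$ itself, not merely in some ambient space. For the later application to Theorem~\ref{thm:grasp}, the latter holds because $\set$ is a closed subset of the complete space $(\R^d,\|\cdot\|_\infty)$.
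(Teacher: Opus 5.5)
Your proof is correct. It is the classical Picard-iteration argument: geometric decay of successive steps, a Cauchy bound uniform in the shift $h$, completeness, passing to the limit via the Lipschitz estimate, and the one-line uniqueness estimate.

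The paper does not prove this theorem itself; it quotes it as a well-known result and points to the contraction-theory literature, so your argument supplies the standard proof being invoked. Your closing observation is also right and worth keeping: $\set$ is a closed subset of $(\R^d,\|\cdot\|_\infty)$, hence complete. The paper only asserts completeness of $(\R^d,\|\cdot\|)$ and uses the completeness of $\set$ implicitly when it applies the theorem on $\set$.
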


\subsection{GRASP Convergence}
\label{app:proofs_convergence}

Recall that GRASP operator $G : \mathbb{R}^n \to \mathbb{R}^n$ is defined coordinatewise as
\begin{equation}
G(s)_j
\;=\;
\frac{1 + \beta \sum_k D_{kj} s_k}
     {1 + \alpha \sum_i W_{ij} s_i},
\end{equation}
where $W_{ij}, D_{ij} \ge 0$ for all $i,j \in [n]$.

\paragraph{Contraction on compact subspaces.} While global contraction on the entire space is often too strong to expect, many nonlinear operators exhibit contraction behavior when restricted to an invariant subset. This motivates analyzing GRASP on a bounded region of interest, corresponding to a meaningful range of argument strengths.

The following lemma formalizes a standard extension of Banach’s theorem to such invariant subsets.
\begin{lemma}[Banach on subsets]
    Let $(\set,d)$ be complete, and assume there is a point $s_0$ and a radius $r$ such that $G:\set\to\set$ is a contraction with Lipschitz constant $\ell$ on $B:=\{x\in\set, d(x,s_0)\le r\}$. Assume $d(s_0,G(s_0))\le r(1-\ell)$, then $B$ is invariant under $G$ and the contraction theorem applies to $G$ restricted to $B$.
\end{lemma}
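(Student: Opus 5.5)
The plan is to verify two things about the ball $B=\{x\in\set,\ d(x,s_0)\le r\}$: that $G$ maps $B$ into itself, and that $B$ with the restricted metric is a complete metric space. Once both hold, $G|_B:B\to B$ is a contraction with factor $\ell<1$ on a complete metric space. Theorem~\ref{thm:banach} then applies directly and gives a unique fixed point in $B$, with the iterates $s_{k+1}=G(s_k)$ converging to it from any start $s_0'\in B$.

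\emph{Invariance.} Take any $x\in B$. I would use the triangle inequality with the intermediate points $G(s_0)$ and $s_0$, then apply the contraction bound. Both $x$ and $s_0$ lie in $B$ (trivially $d(s_0,s_0)=0\le r$), so the bound is legitimate here. Together with the assumption $d(s_0,G(s_0))\le r(1-\ell)$ this gives
\[
d(G(x),s_0)\;\le\; d(G(x),G(s_0)) + d(G(s_0),s_0)\;\le\; \ell\, d(x,s_0) + r(1-\ell)\;\le\; \ell r + r - \ell r \;=\; r .
\]
Hence $G(x)\in B$, so $G(B)\subseteq B$. This estimate is the heart of the lemma: the contraction shrinks the radius by the factor $\ell$, and the displacement of the center, $d(s_0,G(s_0))$, must fit into the remaining slack $r(1-\ell)$.

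\emph{Completeness of $B$.} Since $(\set,d)$ is complete, it suffices to show that $B$ is closed. Let $x_k\in B$ with $x_k\to x\in\set$. By the triangle inequality, $d(x,s_0)\le d(x,x_k)+d(x_k,s_0)\le d(x,x_k)+r$. Letting $k\to\infty$ gives $d(x,s_0)\le r$, so $x\in B$. A Cauchy sequence in $B$ is Cauchy in $\set$, so it converges in $\set$, and by closedness its limit lies in $B$. Thus $(B,d)$ is complete.

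I expect no real obstacle. The only point needing care is the order of the argument: the contraction property is assumed only on $B$, so invariance must be established first. Only after that is $G|_B$ a well-defined self-map of $B$ to which Theorem~\ref{thm:banach} can be applied.
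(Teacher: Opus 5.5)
Your proof is correct and matches the paper's: invariance comes from the same estimate $d(G(x),s_0)\le d(G(x),G(s_0))+d(G(s_0),s_0)\le \ell\, d(x,s_0)+(1-\ell)r\le r$. You also check that $B$ is closed in $\set$, hence complete, which the paper leaves implicit but which is needed to apply Theorem~\ref{thm:banach} to $G$ restricted to $B$.
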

\begin{proof} It is sufficient to show that, for $x\in B$, $G(x)\in B$. Note that
\begin{align*}
    d(G(x),s_0)\le d(G(x),G(s_0))+d(G(s_0),s_0)\\ \le \ell d(x,s_0)+ (1-\ell)r\le r
\end{align*}
and hence this concludes the proof.
\end{proof}

\paragraph{On damping in GRASP.} 

Since GRASP incorporates a damping step in practice, it is important to understand whether damping alone can induce contraction. The following remark clarifies that damping cannot compensate for a lack of contraction in the base operator itself.

\begin{remark}[Damping cannot turn a non-contraction into a contraction]
In GRASP, we use damping:
\[
s^+ = \alpha s + (1-\alpha)G(s) =:\hat G(s),
\]
Doing cannot make $\ell\le 1$ if it was not previously so. Indeed, on $(\R^d,\|\cdot\|)$, for an arbitrary norm:
\begin{align*}
    \|\hat G(x)-\hat G(y)\|=  \|\alpha (x-y) + (1-\alpha)(G(x)-G(y))\|\\
    \le (\alpha + (1-\alpha)\ell) \|x-y\|.
\end{align*}
So if $\ell\le 1$, the new factor is also $\le 1$. If $\ell\ge 1$, the new factor is also $\ge 1$.
\label{rmk:damping}
\end{remark}
The remark above shows that to successfully characterize the convergence properties of (undamped) GRASP, it is necessary and sufficient to consider the properties of $ s\mapsto G(s)$.

\paragraph{Approach.}  Our approach proceeds in two steps. First, we identify a natural bounded subset of $\mathbb{R}^d$ that is invariant under $G$. Second, we establish that $G$ is a contraction on this set under explicit conditions on the interaction matrices.

\subsubsection{Invariance}
\label{sec:grasp_convergence}

Throughout this section, we work with the set
\[
\set := \{ s \in \mathbb{R}^d \;:\; \|s - \mathbf{1}\|_\infty \le 1 \},
\]
which corresponds to bounded, nonnegative strength vectors centered around the neutral baseline $\mathbf{1}=(1,\ldots,1)$.

\begin{lemma}
    Let $x,y\in\mathbb{R}^d$. Then
    \[
    \|x \odot y\|_\infty \le \|x\|_\infty \|y\|_\infty.
    \]
    \label{lemma:int_had}
\end{lemma}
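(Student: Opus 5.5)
The statement to prove is the last one displayed, Lemma~\ref{lemma:int_had}: for $x,y\in\R^d$, $\|x\odot y\|_\infty \le \|x\|_\infty\|y\|_\infty$. The proof is a direct coordinatewise argument with no appeal to earlier results.

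First, recall that $(x\odot y)_i = x_i y_i$ and $\|z\|_\infty=\max_i |z_i|$.

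Next, fix an arbitrary index $i$. By multiplicativity of the absolute value, $|x_i y_i| = |x_i|\,|y_i|$. Both factors are nonnegative, and each is bounded by its maximum: $|x_i|\le\|x\|_\infty$ and $|y_i|\le\|y\|_\infty$. Multiplying nonnegative inequalities preserves the order, so
\[
|(x\odot y)_i| = |x_i|\,|y_i| \le \|x\|_\infty\,\|y\|_\infty .
\]

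Finally, the right-hand side does not depend on $i$. Taking the maximum over $i\in[d]$ on the left therefore gives $\|x\odot y\|_\infty\le\|x\|_\infty\|y\|_\infty$.

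There is no real obstacle here. The only point needing care is that the termwise multiplication of bounds requires nonnegativity, which is why I pass to absolute values before comparing.

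In the invariance and contraction steps that follow, I expect this lemma to be used to bound differences of the form $G(s)-G(s')$. Those differences can be written as Hadamard products of a numerator difference with the reciprocal of a denominator, and the lemma then separates the $\|\cdot\|_\infty$ bounds. For example, with $s\in\set$ we have $\|s\|_\infty\le 2$, so $(W^\top s)_i \le 2\|W\|_1$ and $\alpha(W^\top s)_i\le 1/2$. The lemma itself, however, needs only the two lines above.
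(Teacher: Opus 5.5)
Your proof is correct and is essentially the paper's argument. Both bound $|x_i y_i|$ coordinatewise using $|y_i|\le\|y\|_\infty$ and $|x_i|\le\|x\|_\infty$, then take the maximum over $i$; the paper simply factors out $\|y\|_\infty$ first and then maximizes $|x_i|$.
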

\begin{proof}
    Let $x,y\in\mathbb{R}^d$ and define $(x \odot y)_i := x_i y_i$. Then
    \begin{align*}
    \|x \odot y\|_\infty
    &= \max_i |x_i y_i| \\
    &\le \max_i \bigl(|x_i| \, \|y\|_\infty \bigr) \\
    &= \|y\|_\infty \max_i |x_i| \\
    &= \|x\|_\infty \|y\|_\infty,
    \end{align*}
    where we used the bound $|y_i| \le \|y\|_\infty$ for all $i$.
\end{proof}

We now show that $\set$ is invariant under the GRASP operator. This guarantees that once the iteration enters $\set$, it remains there for all subsequent steps.

\begin{lemma}[Invariance]
    Let $\set:=\{s\in\R^d, \|s-1\|_\infty\le 1\}$ and let $G:\set\to\R^d$ be defined elementwise as
    \[
    G(s)_i = \frac{1 + \beta (D^\top s)_i}{1 + \alpha(W^\top s)_i},
    \]
    for matrices $W,D\in\R^{d\times d}$ and scalars $\alpha,\beta\ge 0$. If $W,D$ have non-negative entries and
    \[
    \alpha \le \frac{1}{4\|W\|_1},
    \qquad
    \beta \le \frac{1}{4\|D\|_1},
    \]
    then $G(\set)\subseteq \set$.
    \label{lemma:invariance}
\end{lemma}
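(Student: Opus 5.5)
The plan is to bound the numerator and the denominator of each coordinate of $G(s)$ separately, using only that every $s\in\set$ is entrywise nonnegative with $0\le s_j\le 2$, i.e.\ $\|s\|_\infty\le 2$. Invariance $G(\set)\subseteq\set$ means $|G(s)_i-1|\le 1$ for every $i$. Since $G(s)_i$ is a ratio of positive quantities, this reduces to the sandwich $0\le G(s)_i\le 2$.

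First I relate the column-sum norm to the relevant matrix--vector products. For nonnegative $W$ and $0\le s\le 2$,
\[
0\le (W^\top s)_i=\sum_j W_{ji}s_j\le 2\sum_j W_{ji}\le 2\|W\|_1 ,
\]
because $\sum_j W_{ji}$ is the $i$-th column sum and $\|W\|_1$ is the maximum column sum. The same bound holds for $D$.

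Combining this with the hypotheses on $\alpha$ and $\beta$ gives
\[
0\le \alpha(W^\top s)_i\le \tfrac{1}{2},
\qquad
0\le \beta(D^\top s)_i\le \tfrac{1}{2}.
\]
Hence the denominator lies in $[1,\tfrac32]$ and the numerator lies in $[1,\tfrac32]$. The ratio then satisfies
\[
\tfrac{2}{3}\le G(s)_i\le \tfrac{3}{2}.
\]
This is strictly inside $[0,2]$, so $\|G(s)-\mathbf 1\|_\infty\le \tfrac12\le 1$, and in particular $G(s)\in\set$.

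The factor $\tfrac14$ in the hypotheses is more than invariance needs; it is presumably chosen with the later contraction estimate in mind. Lemma~\ref{lemma:int_had} can be used to write the bound on $\beta(D^\top s)$ in vector form if preferred, but it is not essential here.

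The only delicate point is bookkeeping: $G$ involves $W^\top s$, so the relevant sums are column sums of $W$. That matches the paper's definition $\|A\|_1=\max_j\sum_i|a_{ij}|$, so the norm appears with the correct orientation. Nonnegativity of the entries and of $s$ is what keeps the denominator at least $1$, which avoids division issues.
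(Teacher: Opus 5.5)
Your proof is correct, but it takes a different route from the paper's. The paper writes $G(s)-\mathbf 1=(\beta D^\top-\alpha W^\top)s\,/\,(1+\alpha W^\top s)$ and drops the denominator because it is at least $1$. It then applies the triangle inequality: $\|(\beta D^\top-\alpha W^\top)s\|_\infty\le(\beta\|D\|_1+\alpha\|W\|_1)\|s\|_\infty\le(\tfrac14+\tfrac14)\cdot 2=1$. This is a compact operator-norm computation in the same style as the Lipschitz lemma. However, it uses both hypotheses and lands exactly on the radius $1$ of $\set$.

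You instead bound the numerator and the denominator separately, each in $[1,\tfrac32]$. This exploits the fact that $\beta(D^\top s)_i$ and $\alpha(W^\top s)_i$ are both nonnegative, so they partially cancel rather than add as the triangle inequality assumes. Your route gives a strictly stronger conclusion: $G(\set)\subseteq[\tfrac23,\tfrac32]^d$, i.e.\ $\|G(s)-\mathbf 1\|_\infty\le\tfrac12$.

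Pushed one step further, your argument shows that invariance needs no condition on $\alpha$ at all, since the denominator is at least $1$ regardless. It needs only $\beta\le 1/(2\|D\|_1)$. This confirms your remark that the factor $\tfrac14$ is there for the contraction step. As a side benefit, using $\|G(s)\|_\infty\le\tfrac32$ in the paper's proof of Theorem~\ref{thm:grasp} would tighten the contraction factor from $\tfrac34$ to $\tfrac58$.
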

\begin{proof}
    Using the triangle inequality and denoting $\mathbf{1}=(1,\ldots,1)\in\set$, we obtain
    \begin{align*}
    \|G(s)-\mathbf{1}\|_\infty
    &= \left\|
        \frac{1 + \beta D^\top s}{1 + \alpha W^\top s}
        - \mathbf{1}
       \right\|_\infty \\
    &= \left\|
        \frac{(\beta D^\top - \alpha W^\top)s}
             {1 + \alpha W^\top s}
       \right\|_\infty.
    \end{align*}
    Since $\alpha>0$ and $W$ has non-negative entries, the denominator satisfies
    $1+\alpha W^\top s > 1$ elementwise and can be dropped for an upper bound. Using norm subadditivity,
    \begin{align*}
    \|G(s)-\mathbf{1}\|_\infty
    &\le \|(\beta D^\top - \alpha W^\top)s\|_\infty \\
    &\le \|\beta D^\top - \alpha W^\top\|_\infty \|s\|_\infty \\
    &\le (\beta\|D^\top\|_\infty + \alpha\|W^\top\|_\infty)\|s\|_\infty.
    \end{align*}
    Using the identity $\|M^\top\|_\infty = \|M\|_1$ and the fact that $s\in\set$ implies $\|s\|_\infty\le 2$, we conclude
    \[
    \|G(s)-\mathbf{1}\|_\infty
    \le (\beta\|D\|_1 + \alpha\|W\|_1)\cdot 2
    \le \left(\tfrac14 + \tfrac14\right)\cdot 2
    = 1.
    \]
\end{proof}

\subsubsection{Lipschitz Constant}

We next establish a Lipschitz bound for $G$ on $\set$, which will allow us to invoke Banach’s fixed-point theorem.

\begin{lemma}[Lipschitz Constant]
    Consider the GRASP operator defined coordinatewise as
    \[
    G(s)_i = \frac{1 + \beta (D^\top s)_i}{1 + \alpha(W^\top s)_i}.
    \]
    Let $x,y\in\set$. If $W,D$ have non-negative entries, then
    \begin{align*}
    \|G(x)-G(y)\|_\infty
    \le \ell \|x-y\|_\infty, \\
    \ell
    := \beta\|D\|_1
      + \alpha\|W\|_1
        \frac{\|G(x)\|_\infty + \|G(y)\|_\infty}{2}.
    \end{align*}
    \label{lemma:lip}
\end{lemma}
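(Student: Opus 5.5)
The plan is to write the difference of two quotients so that the numerator and denominator increments appear separately. Set $N(s):=\mathbf 1+\beta D^\top s$ and $M(s):=\mathbf 1+\alpha W^\top s$, so that $G(s)=N(s)\oslash M(s)$ coordinatewise, where $\oslash$ denotes elementwise division. Then, coordinatewise,
\[
G(x)_i-G(y)_i=\frac{N(x)_i-N(y)_i}{M(x)_i}-\frac{N(y)_i}{M(x)_iM(y)_i}\bigl(M(x)_i-M(y)_i\bigr),
\]
and the second term can be rewritten using $N(y)_i/M(y)_i=G(y)_i$. This gives
\[
G(x)-G(y)=\frac{\beta D^\top(x-y)}{M(x)}-\frac{G(y)\odot \alpha W^\top(x-y)}{M(x)}.
\]
Since $x\in\set$ has nonnegative entries and $W$ is nonnegative, $M(x)_i\ge 1$, so the division by $M(x)$ can only shrink magnitudes and may be dropped in an upper bound, exactly as in the proof of Lemma~\ref{lemma:invariance}.

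Next I take $\|\cdot\|_\infty$ and apply the triangle inequality together with Lemma~\ref{lemma:int_had} on the Hadamard product. The operator-norm identity $\|M^\top\|_\infty=\|M\|_1$ then yields
\[
\|G(x)-G(y)\|_\infty\le\bigl(\beta\|D\|_1+\alpha\|W\|_1\|G(y)\|_\infty\bigr)\|x-y\|_\infty.
\]

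Then I symmetrize. Exchanging the roles of $x$ and $y$ (expanding around $G(x)$ and dividing by $M(y)$) gives the same bound with $\|G(x)\|_\infty$ in place of $\|G(y)\|_\infty$. Averaging the two valid bounds produces the stated constant $\ell$ with $\tfrac12(\|G(x)\|_\infty+\|G(y)\|_\infty)$.

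The only delicate step is the algebraic splitting. One must choose which denominator multiplies which increment so that a factor of $G(\cdot)$ appears cleanly and the leftover denominator is at least $1$; this is where nonnegativity of $W$ and $s\in\set$ are used. The rest is routine norm bookkeeping.

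For the downstream use in Theorem~\ref{thm:grasp}, Lemma~\ref{lemma:invariance} gives $\|G(\cdot)\|_\infty\le 2$ on $\set$. Hence $\ell\le\tfrac14+2\cdot\tfrac14=\tfrac34<1$, and Banach's theorem applies on the closed set $\set$, with Remark~\ref{rmk:damping} covering the damped variant.
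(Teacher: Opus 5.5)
Your proof is correct and follows essentially the paper's route: split the quotient difference into a numerator-increment term and a denominator-increment term, drop the denominators using $1+\alpha W^\top s\ge 1$ on $\set$, and finish with Lemma~\ref{lemma:int_had} and $\|A^\top\|_\infty=\|A\|_1$. The only difference is that the paper averages the two symmetric expansions inside the identity before taking norms, whereas you bound each one-sided expansion and then average the bounds, which also shows that $\min(\|G(x)\|_\infty,\|G(y)\|_\infty)$ could replace the average.
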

\begin{proof}
    Define elementwise
    \[
    (\NN_x)_i := 1 + \beta(D^\top x)_i,
    \qquad
    (\DD_x)_i := 1 + \alpha(W^\top x)_i.
    \]
    Then
    \begin{align*}
    \|G(x)-G(y)\|_\infty
    &= \left\|
        \frac{\NN_x}{\DD_x}
        - \frac{\NN_y}{\DD_y}
       \right\|_\infty \\
    &= \left\|
        \frac{\NN_x \DD_y - \NN_y \DD_x}{\DD_x \DD_y}
       \right\|_\infty.
    \end{align*}
    Expanding the numerator in two symmetric ways,
    \begin{align*}
    \NN_x \DD_y - \NN_y \DD_x
    &= (\NN_x-\NN_y)\DD_y + (\DD_y-\DD_x)\NN_y, \\
    \NN_x \DD_y - \NN_y \DD_x
    &= (\NN_x-\NN_y)\DD_x + (\DD_y-\DD_x)\NN_x.
    \end{align*}
    Averaging these expressions yields
    \begin{align*}
        |G(x)-G(y)\|_\infty \\ = \left\|
        (\NN_x-\NN_y)\frac{\DD_x+\DD_y}{2\DD_x\DD_y}
        + (\DD_y-\DD_x)\frac{\NN_x+\NN_y}{2\DD_x\DD_y}
      \right\|_\infty
    \end{align*}

    Applying Lemma~\ref{lemma:int_had} and the triangle inequality,
    \begin{align*}
    \|G(x)-G(y)\|_\infty
    &\le \|\NN_x-\NN_y\|_\infty
         \left\|\frac{\DD_x+\DD_y}{2\DD_x\DD_y}\right\|_\infty \\
    &\quad + \|\DD_y-\DD_x\|_\infty
         \left\|\frac{\NN_x+\NN_y}{2\DD_x\DD_y}\right\|_\infty.
    \end{align*}
    Since $\DD_x,\DD_y\ge 1$ elementwise,
    \[
    \left\|\frac{\DD_x+\DD_y}{2\DD_x\DD_y}\right\|_\infty \le 1.
    \]
    Moreover,
    \begin{align*}
    \left\|\frac{\NN_x+\NN_y}{2\DD_x\DD_y}\right\|_\infty
    &\le \frac12
      \left(
        \left\|\frac{\NN_x}{\DD_x}\right\|_\infty
        +
        \left\|\frac{\NN_y}{\DD_y}\right\|_\infty
      \right) \\
    &= \frac{\|G(x)\|_\infty + \|G(y)\|_\infty}{2}.
    \end{align*}
    Finally,
    \[
    \|\NN_x-\NN_y\|_\infty
    \le \beta\|D^\top\|_\infty \|x-y\|_\infty
    = \beta\|D\|_1 \|x-y\|_\infty,
    \]
    and similarly
    \[
    \|\DD_x-\DD_y\|_\infty
    \le \alpha\|W\|_1 \|x-y\|_\infty.
    \]
\end{proof}

The convergence result in the main paper naturally follows.


\subsubsection{Proof of Theorem~\ref{thm:grasp}}

\paragraph{Proof sketch.}
The proof identifies a bounded invariant set
\(\set=\{s\in\mathbb{R}^d:\|s-\mathbf{1}\|_\infty\le 1\}\) and then shows that \(G\) is a
contraction on \(\set\) under the infinity norm. Under the nonnegativity assumptions on
\(W,D\) and the bounds
\(\alpha \le 1/(4\|W\|_1)\) and \(\beta \le 1/(4\|D\|_1)\), Lemma~\ref{lemma:invariance}
implies \(\|G(s)-\mathbf{1}\|_\infty \le 1\) for all \(s\in\set\), hence
\(G(\set)\subseteq \set\). Next, Lemma~\ref{lemma:lip} bounds the change in the ratio
defining \(G\): for any \(x,y\in\set\), the perturbations induced by the attack and defense
terms yield
\[
\|G(x)-G(y)\|_\infty \le \ell \|x-y\|_\infty,
\qquad
\ell \le \beta\|D\|_1 + 2\alpha\|W\|_1 \le \frac{3}{4}<1.
\]
Thus \(G\) is a contraction on the invariant set \(\set\). Banach's fixed-point theorem then
gives existence, uniqueness, and convergence to the fixed point.

\begin{proof}
    The invariance claim follows directly from Lemma~\ref{lemma:invariance}. The contraction
    claim follows from Lemma~\ref{lemma:lip}: under the assumptions on $\alpha,\beta$,
    \begin{align*}
        \ell
        &= \max_{x,y\in\set}\left[
        \beta\|D\|_1
        + \alpha\|W\|_1
        \frac{\|G(x)\|_\infty+\|G(y)\|_\infty}{2}
        \right] \\
        &\le \beta\|D\|_1 + 2\alpha\|W\|_1
        \le \frac{1}{4}+\frac{1}{2}<1,
    \end{align*}
    where we used that $\|G(s)\|_\infty\le 2$ for all $s\in\set$. Hence \(G\) is a
    contraction on the invariant set \(\set\), and the result follows from Banach's
    fixed-point theorem.
\end{proof}

\section{Illustrative Example: Dynamic Ranking Shift and Convergence}
\label{sec:app_example}

\begin{wrapfigure}{r}{0.52\textwidth}
\centering
\vspace{-4mm}
\includegraphics[width=0.50\textwidth]{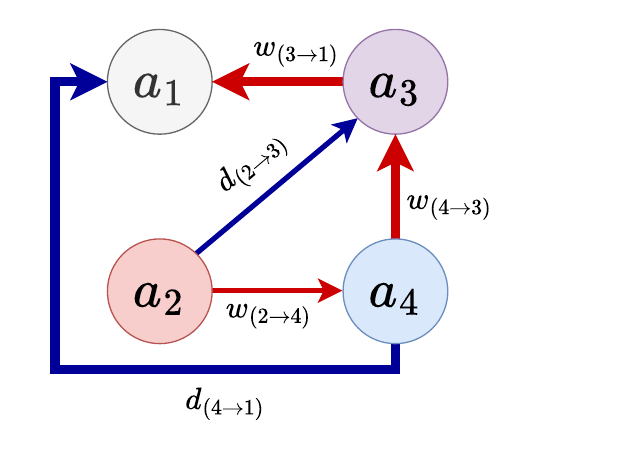}
\caption{Toy argumentation graph illustrating non-monotonic score evolution and a rank
reversal during \textsc{GRASP} iteration. Argument \(a_4\) attacks \(a_3\), thereby defending
\(a_1\); \(a_2\) attacks \(a_4\), indirectly restoring strength to \(a_3\).}
\label{fig:app_example2}
\vspace{-8mm}
\end{wrapfigure}

This example illustrates how \textsc{GRASP} can exhibit non-monotonic score evolution before
converging to a stable ordering, even in a small graph. We consider four arguments:
\(a_1\) is the central claim, \(a_3\) attacks \(a_1\), \(a_4\) attacks \(a_3\) and thereby
defends \(a_1\), and \(a_2\) weakly attacks \(a_4\).

The weighted attack matrix \(W\) uses rows as attackers and columns as targets:
\[
W =
\begin{bmatrix}
0 & 0 & 0 & 0 \\
0 & 0 & 0 & 0.3 \\
1 & 0 & 0 & 0 \\
0 & 0 & 1 & 0
\end{bmatrix},
\]
\[
w_{31}=1.0,\quad w_{43}=1.0,\quad w_{24}=0.3 .
\]
The induced two-hop defense matrix is
\[
D = W W =
\begin{bmatrix}
0 & 0 & 0 & 0 \\
0 & 0 & 0.3 & 0 \\
0 & 0 & 0 & 0 \\
1 & 0 & 0 & 0
\end{bmatrix},
\]
where \(d_{41}=1.0\) captures that \(a_4\) defends \(a_1\), and \(d_{23}=0.3\) captures
that \(a_2\) weakly defends \(a_3\) by attacking \(a_4\).

We use the undamped update
\[
G(s)_j =
\frac{1 + \beta (D^\top s)_j}
     {1 + \alpha (W^\top s)_j},
\qquad
s^{(t+1)} = G(s^{(t)}),
\]
with \(\alpha=1.0\), \(\beta=0.5\), and \(s^{(0)}=\mathbf{1}\). This value of
\(\alpha\) lies outside the sufficient contraction bound in Theorem~\ref{thm:grasp}; the
theorem gives a worst-case sufficient condition, not a necessary condition for convergence.
The iteration nevertheless converges in this graph.

\begin{center}
\small
\setlength{\tabcolsep}{6pt}
\renewcommand{\arraystretch}{1.08}
\begin{tabular}{ccl}
\toprule
\textbf{Iteration} & \textbf{Scores} & \textbf{Ranking} \\
\midrule
1 & \([0.750,\;1.000,\;0.575,\;0.769]\) & \(a_2 > a_4 > a_1 > a_3\) \\
2 & \([0.879,\;1.000,\;0.650,\;0.769]\) & \(a_2 > a_1 > a_4 > a_3\) \\
3 & \([0.839,\;1.000,\;0.650,\;0.769]\) & \(a_2 > a_1 > a_4 > a_3\) \\
\bottomrule
\end{tabular}
\end{center}

The rank reversal occurs between iterations 1 and 2: \(a_1\) overtakes \(a_4\).
Initially, \(a_1\) is suppressed by the strong attack from \(a_3\). As \(a_4\) weakens
\(a_3\), pressure on \(a_1\) decreases and \(a_1\) recovers. Meanwhile, the weak but
persistent attack from \(a_2\) fixes \(a_4\)'s score below the neutral baseline. The score
of \(a_1\) briefly overshoots at iteration 2 before settling to its fixed-point value,
illustrating that \textsc{GRASP} dynamics are not equivalent to a one-step local sorting
heuristic.

\section{Structural Evaluation on Synthetic Graphs}
\label{sec:synthetic_structural}

To evaluate whether ranking operators respect the principles of
\emph{structural sufficiency} (Section~\ref{sec:struct_suff}), we construct a controlled
synthetic testbed of argumentation graphs with explicit interaction structure. Unlike
natural debates, these graphs admit unambiguous structural constraints, allowing direct
evaluation against \emph{necessary ranking conditions} implied by structural sufficiency
rather than subjective judgment.

Our objective is not to induce a total order, but to test whether a method violates ranking
relations that should hold when one argument is structurally better defended than another.

\subsection{Structural Archetypes}
\label{app:struct_archetypes}

We consider a suite of small, canonical \emph{structural archetypes}
(Figure~\ref{fig:structural_motifs}), each isolating a distinct dialectical phenomenon.
Each instance is a directed argumentation graph \(\mathcal{G}=(A,R^-)\) with
\(|A|\in[4,6]\). Edges represent attacks with positive weight; self-attacks are disallowed.
To avoid degenerate sparsity, we optionally add low-weight noise attacks that preserve the
core structure while increasing heterogeneity.

\begin{wrapfigure}{r}{0.55\textwidth}
\centering
\vspace{-4mm}
\includegraphics[width=0.53\textwidth]{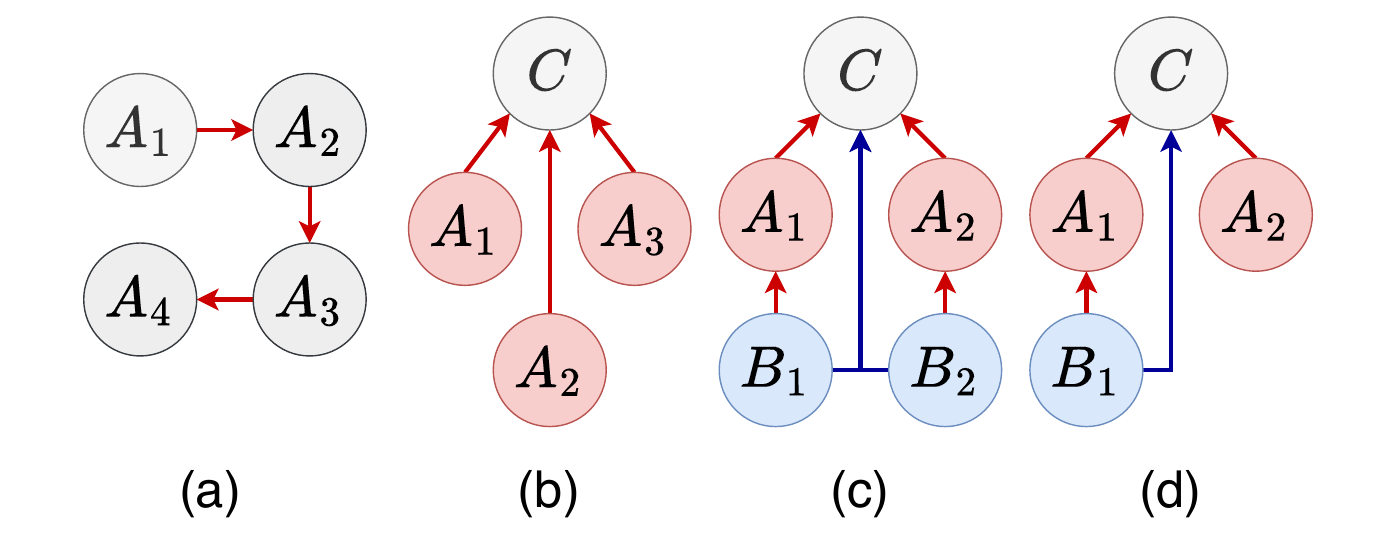}
\caption{Canonical structural archetypes used in synthetic evaluation.
(a) \textbf{Attack Chain}: counter-attacks reinstate upstream arguments.
(b) \textbf{Fork}: multiple arguments attack a single target.
(c) \textbf{Diamond}: parallel attacks followed by convergent counter-attacks.
(d) \textbf{Bipolar Structure}: multiple attackers of a claim with a downstream attack on one attacker.}
\label{fig:structural_motifs}
\vspace{-4mm}
\end{wrapfigure}

The archetypes include: (i) \textbf{Attack Chains}, testing reinstatement through
counter-attacks; (ii) \textbf{Forks}, testing sensitivity to convergent attack;
(iii) \textbf{Diamonds}, testing cascading defense; and (iv) \textbf{Bipolar Structures},
testing locality under mixed attack patterns.

\paragraph{Random DAG stress tests.}
In addition to canonical motifs, we include random directed acyclic graphs (DAGs) with
\(n=20\) nodes and edge probabilities \(p\in\{0.1,0.3\}\). Edge weights are sampled
uniformly from \([0.2,1.0]\). These graphs do not encode specific dialectical motifs and
are used solely to test robustness and convergence under heterogeneous structure.

\subsection{Critical Ranking Conditions}
\label{app:struct_crc}

For each archetype, we derive \emph{critical ranking conditions} (CRCs), expressed as
pairwise constraints \(a \succ b\). CRCs are necessary conditions implied by structural
sufficiency and intentionally do not define a total order. Each CRC is tied to one of the
axioms in Section~\ref{sec:struct_suff}: attack sensitivity (S1), defense reinstatement
(S2), structural locality (S3), or baseline sufficiency (S4).

For example, in attack-chain and diamond motifs, a defended argument should outrank the
attacker that has been counter-attacked; in fork structures, a target with unneutralized
attackers should not outrank its relevant attackers; and in baseline cases, unattacked
arguments should not be ranked below attacked ones. For random DAGs, only baseline
sufficiency (S4) constraints are imposed. Violations on these graphs therefore reflect
robustness limitations under noisy, non-motif structure rather than failure on canonical
axiomatic cases.

CRCs deliberately exclude stronger monotonic or comparative claims, such as ``adding support
must increase strength,'' because these are not implied by structural sufficiency alone.

\subsection{Methods and Metrics}
\label{subsec:struct_methods}

We compare \textsc{GRASP} against standard structural ranking baselines. All methods operate
solely on the weighted attack matrix \(W \in \mathbb{R}_{\ge 0}^{n \times n}\), where
\(W_{ij}\) denotes the strength of the attack from \(a_i\) to \(a_j\). Each method produces a
real-valued strength score \(s_j\) for every argument \(a_j\); rankings are obtained by
sorting arguments in decreasing \(s_j\).

\paragraph{GRASP variants.}
We evaluate several choices of the defense matrix \(D\) to test how different notions of
defense affect structural consistency. The default choice is \(D=W^2\), where a two-hop path
\(a_k \rightarrow a_i \rightarrow a_j\) means that \(a_k\) attacks an attacker of \(a_j\).
This directly matches the structural sufficiency notion of neutralization by counter-attack.
We also evaluate \(D=W^\top\), which treats reciprocal attack structure as defense;
\(D=W^4\), which uses longer even-length reinstatement chains; and
\(D=W^2+\frac{1}{2}W^4\), which combines direct two-hop defense with discounted higher-order
defense. Finally, \(\beta=0\) disables defense propagation, reducing GRASP to an
attack-only propagation baseline.

\paragraph{Baselines.}
The H-categorizer~\citep{besnard2001logic} penalizes arguments by total incoming attack:
\[
s_j = \frac{1}{1+\sum_i W_{ij}}.
\]
KatzAttack adapts Katz centrality~\citep{katz1953new} by accumulating discounted attack
paths,
\[
c = (I-\lambda W^\top)^{-1}\mathbf{1}, \qquad s_j = 1/c_j,
\]
with \(\lambda\) chosen small enough for convergence. The Defense Ratio baseline uses a
closed-form attack--defense balance,
\[
s_j =
\frac{1+\sum_k (W^2)_{kj}}
     {1+\sum_i W_{ij}},
\]
but does not perform iterative strength propagation. We also include Binary Indegree,
\[
s_j = \frac{1}{1+\sum_i \mathbb{I}[W_{ij}>0]},
\]
and Max Incoming Attack,
\[
s_j = \frac{1}{1+\max_i W_{ij}}.
\]

\paragraph{Metrics.}
Given CRCs of the form \(a \succ b\), a violation occurs whenever \(s_a \le s_b\). We
report: (i) \textbf{violation rate}, the fraction of violated CRCs; (ii) \textbf{violation
severity}, the mean normalized margin \(s_b-s_a\) over violations; (iii) \textbf{mean
iterations} to convergence for iterative methods; and (iv) \textbf{convergence fraction},
the fraction of graphs on which the method converges within the iteration budget.

\subsection{Results}
\label{app:struct_results}

\begin{wraptable}{r}{7.8cm}
\centering
\small
\setlength{\tabcolsep}{3pt}
\aboverulesep=0ex
\belowrulesep=0ex
\resizebox{7.6cm}{!}{%
\begin{tabular}{l cccc}
\toprule
\textbf{Method} & \textbf{Viol.} $\downarrow$ & \textbf{Sev.} $\downarrow$ & \textbf{Iter.} & \textbf{Conv.} \\
\midrule
\rowcolor{gray!15}
\textbf{GRASP} ($D=W^2$)  & \textbf{0.003} & \textbf{0.010} & 65.8 & 100\% \\
GRASP ($D=W^T$)           & 0.163          & 0.091          & 57.6 & 100\% \\
GRASP ($D=W^4$)           & 0.220          & 0.117          & 60.6 & 100\% \\
GRASP ($D=W^2 + \frac{1}{2}W^4$) & 0.269   & 0.129          & 61.2 & 100\% \\
GRASP ($\beta=0$)         & 0.071          & 0.019          & 63.7 & 100\% \\
\midrule
Defense Ratio             & 0.042          & 0.042          & --   & -- \\
KatzAttack                & 0.228          & 0.012          & --   & -- \\
H-Categorizer             & 0.228          & 0.042          & --   & -- \\
Binary Indegree           & 0.233          & 0.031          & --   & -- \\
Max Incoming Attack       & 0.290          & 0.013          & --   & -- \\
\bottomrule
\end{tabular}
}
\caption{Structural evaluation. Viol. and Sev. denote violation rate and violation severity
against critical ranking conditions. Bold indicates best performance; gray shading marks the
default GRASP variant with \(D=W^2\).}
\vspace{-4mm}
\label{tab:synthetic_results}
\end{wraptable}

\paragraph{Summary.}
Table~\ref{tab:synthetic_results} shows that \textsc{GRASP} with \(D=W^2\) achieves the
lowest violation rate and severity across the synthetic suite. This supports the choice of
\(D=W^2\) as the default defense construction for debate-style attack graphs: it directly
encodes the minimal reinstatement pattern in structural sufficiency, namely attacking an
attacker. Alternative defense choices perform worse, suggesting that not all higher-order or
reciprocal constructions align with the CRCs induced by structural sufficiency.

Disabling defense propagation (\(\beta=0\)) increases violations, confirming that attack-only
aggregation is insufficient. The Defense Ratio baseline performs competitively but still
violates more CRCs than GRASP, indicating that non-iterative two-hop defense captures part,
but not all, of the relevant structure. Overall, these results show that structural
sufficiency is best captured by propagation over explicit attack--defense structure rather
than by local, linear, or non-propagative aggregation alone.

\section{Additional Diagnostic Analyses}
\label{app:ablations}

\subsection{Centrality Alignment and Structural Centers}
\label{sec:centrality}

\begin{wraptable}{r}{7.5cm}
\centering
\small
\vspace{-5mm}
\setlength{\tabcolsep}{3pt}
\aboverulesep=0ex
\belowrulesep=0ex
\resizebox{7.3cm}{!}{%
\begin{tabular}{l cc cc}
\toprule
& \multicolumn{2}{c}{Mean \(\rho\) \(\downarrow\)} & \multicolumn{2}{c}{Median \(\rho\) \(\downarrow\)} \\
\cmidrule(lr){2-3} \cmidrule(lr){4-5}
Method & P & MT & P & MT \\
\midrule
RAW                         & -0.007          & \phantom{-}0.071 & \phantom{-}0.049 & \phantom{-}0.093 \\
\rowcolor{gray!15}
\textbf{GRASP}              & -0.955          & -0.960            & -0.967            & -0.973 \\
GRASP-\(W_1\)               & \textbf{-1.000} & \textbf{-0.998}   & \textbf{-1.000}   & \textbf{-0.998} \\
GRASP-\(W_\infty{+}\bar{D}\)& -0.996          & -0.994            & -0.997            & -0.997 \\
GRASP-\(W_1{+}\bar{D}\)     & -0.986          & -0.982            & -0.990            & -0.988 \\
GRASP-\(W_\infty\)          & -0.954          & -0.958            & -0.966            & -0.973 \\
\bottomrule
\end{tabular}
}
\caption{Spearman correlation between rankings and in-strength ordering. More negative values
indicate stronger alignment with the principle that heavily attacked arguments should rank lower.
Gray shading marks the default \textsc{GRASP} variant.}
\vspace{-6mm}
\label{tab:centrality_alignment}
\end{wraptable}

We first test whether \textsc{GRASP} rankings track simple graph-theoretic signals derived
from the attack matrix \(W\). This analysis is intended as a diagnostic sanity check: if
\textsc{GRASP} is structurally grounded, its rankings should reflect the attack geometry of
the graph. It should not, however, be interpreted as evidence that \textsc{GRASP} is
equivalent to a centrality measure; the structural testbed in
Appendix~\ref{sec:synthetic_structural} shows that local and non-propagative baselines
violate substantially more structural constraints.

\begin{wraptable}{r}{6.5cm}
\centering
\small
\vspace{-4mm}
\setlength{\tabcolsep}{6pt}
\aboverulesep=0ex
\belowrulesep=0ex
\begin{tabular}{lccc}
\toprule
Setting & Mean & Med. & 90\% \\
\midrule
Multi-turn & 0.526 & 0.500 & 0.833 \\
Pool       & 0.463 & 0.417 & 0.667 \\
\bottomrule
\end{tabular}
\caption{Consensus over the central argument. Values show the fraction of judge models selecting
the same top-1 argument by incoming attack mass.}
\vspace{-6mm}
\label{tab:central_node_consensus}
\end{wraptable}

For each debate and judge model, we construct a weighted directed graph with edge weights
given by pairwise attack scores. We compute several graph summaries, including in-strength,
out-strength, net-strength, and PageRank. Among these, in-strength---the total incoming attack
mass---is the most stable diagnostic, and we therefore report alignment with in-strength.

\paragraph{Ranking--centrality alignment.}
Table~\ref{tab:centrality_alignment} reports Spearman correlation between each ranking and
the in-strength ordering. All \textsc{GRASP} variants exhibit strong negative correlation
with in-strength, indicating that arguments receiving more incoming attack mass are generally
ranked lower. In contrast, \textsc{RAW} LLM rankings show near-zero correlation, suggesting
that direct holistic judgments do not consistently track this basic structural signal.

\paragraph{Existence of a structural center.}
We also ask whether independently constructed attack graphs agree on which argument is most
structurally central. For each debate and judge model, we identify the top-1 argument by
incoming attack mass and measure the fraction of models selecting the same argument.
Table~\ref{tab:central_node_consensus} reports the resulting consensus rates.

\paragraph{Takeaway.}
Attack graphs induced by different judge models exhibit a moderately stable structural
center. Combined with the strong negative alignment between \textsc{GRASP} rankings and
incoming attack mass, this indicates that \textsc{GRASP} tracks explicit graph structure
whereas \textsc{RAW} rankings do not. The structural testbed further shows that this
structural sensitivity is not sufficient on its own: propagation and defense are needed to
satisfy the critical ranking conditions induced by structural sufficiency.

\subsection{Attack-Graph Geometry Across Debates}
\label{app:graph_geometry}

\begin{figure*}[t]
\centering
\includegraphics[width=\linewidth]{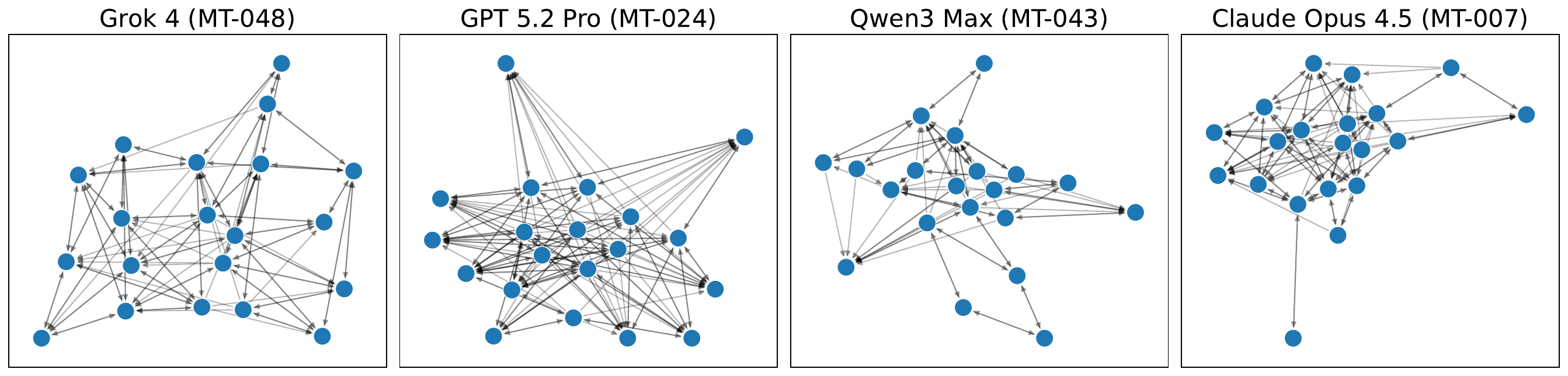}
\caption{
Attack graphs induced by the same judge model, \texttt{openai/gpt-5.2-chat}, for four
multi-turn debates. Nodes correspond to arguments and directed edges indicate attacks with
\(W_{ij}>\tau\), using visualization threshold \(\tau=0.6\).
}
\label{fig:graphs_appendix}
\end{figure*}

We provide qualitative visualizations of induced attack graphs for four multi-turn debates,
all constructed using the same judge model, \texttt{openai/gpt-5.2-chat}. Holding the judge
fixed isolates variation arising from debate content rather than from model-specific scoring
differences.

\paragraph{Debates and motions.}
The four debates are: \textbf{MT-048} (\texttt{x-ai/grok-4}), \emph{This House would break
up dominant technology monopolies}; \textbf{MT-024} (\texttt{openai/gpt-5.2-pro}),
\emph{This House would ban the private ownership of historical artifacts}; \textbf{MT-043}
(\texttt{qwen/qwen3-max}), \emph{This House would allow preventive detention for credible
terrorism threats}; and \textbf{MT-007} (\texttt{anthropic/claude-opus-4.5}), \emph{This
House would abolish the minimum wage law}.

\paragraph{Observed structure.}
Across these debates, thresholded densities range from \(d\approx 0.57\) to \(0.65\), while
mean off-diagonal attack strengths range from \(\mu\approx 0.26\) to \(0.37\). Thus, although all four debates yield moderately dense attack graphs, they differ in how attack mass is distributed. Some debates concentrate high-confidence attacks on a small set of targets, while others spread attacks more evenly across arguments; they also differ in the degree of reciprocal versus asymmetric attack structure.

\paragraph{Interpretation.}
These visualizations are illustrative rather than evidence that any particular topology is
preferable. Their purpose is to show that local pairwise interaction judgments give rise to
distinct global graph geometries, which \textsc{GRASP} subsequently aggregates into stable
rankings. This qualitative variation complements the quantitative analyses linking graph
structure to convergence behavior and ranking consistency.

\subsection{Ablation: Structural Similarity vs.\ Ranking Similarity}
\label{sec:w_vs_rank_similarity}

We study whether similarity between induced attack graphs translates into similarity of
final \textsc{GRASP} rankings. For each unordered pair of judge models \((m_i,m_j)\), and
for each setting (pool or multi-turn), we compute:
\begin{itemize}
    \item the mean Pearson correlation between their attack matrices,
    \(\rho(W^{(i)}, W^{(j)})\), using vectorized off-diagonal entries;
    \item the mean Kendall \(\tau\) correlation between the corresponding final
    \textsc{GRASP} rankings, averaged across debates.
\end{itemize}

Each point in Figure~\ref{fig:w_vs_rank} corresponds to one unordered pair of judge models.
The horizontal axis measures similarity of the induced attack graphs, while the vertical axis
measures similarity of the resulting \textsc{GRASP} rankings.

\begin{figure}[h!]
\centering
\includegraphics[width=\linewidth]{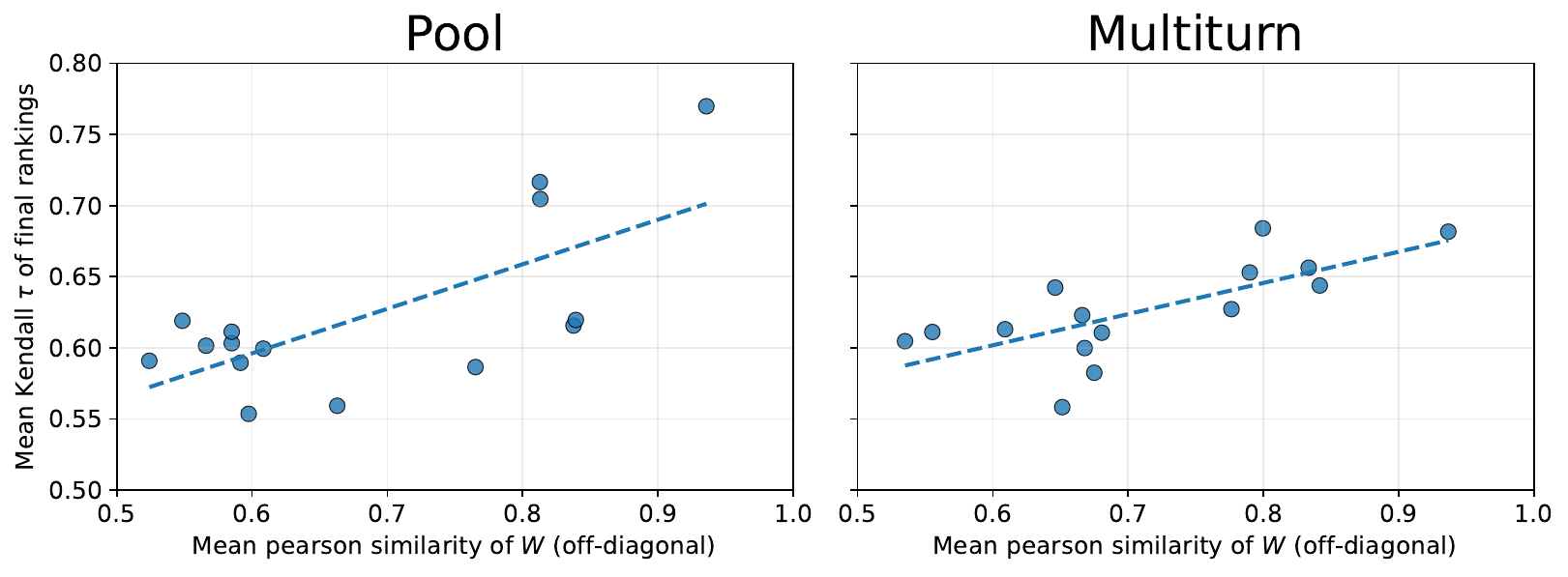}
\caption{Relationship between similarity of induced attack graphs \(W\) and similarity of
final \textsc{GRASP} rankings. Left: pool setting. Right: multi-turn setting. Dashed lines
show least-squares linear fits.}
\label{fig:w_vs_rank}
\end{figure}

Across both settings, we observe a positive association: pairs of judges whose attack
matrices are more strongly correlated tend to produce more similar \textsc{GRASP} rankings.
The trend is consistent in both pool and multi-turn regimes, though with non-negligible
dispersion. This indicates that graph similarity explains a meaningful portion of
inter-model agreement, while the nonlinear propagation dynamics also affect the final
rankings.

\paragraph{Takeaway.}
Agreement at the level of local pairwise attack judgments tends to propagate to agreement in
global \textsc{GRASP} rankings. This supports the view that \textsc{GRASP}'s stability arises
from shared structure in the induced attack graphs rather than from properties of any single
judge model.

\subsection{Angle-Level Agreement Analysis}
\label{app:angle_agreement}

\begin{wraptable}{r}{0.58\textwidth}
\centering
\small
\vspace{-4mm}
\setlength{\tabcolsep}{3pt}
\renewcommand{\arraystretch}{0.92}
\resizebox{0.57\textwidth}{!}{%
\begin{tabular}{l cc cc cc cc}
\toprule
& \multicolumn{2}{c}{Top-3 \(\uparrow\)} 
& \multicolumn{2}{c}{Top-5 \(\uparrow\)}
& \multicolumn{2}{c}{Swap \(\downarrow\)} 
& \multicolumn{2}{c}{\(\rho\) \(\uparrow\)} \\
\cmidrule(lr){2-3} \cmidrule(lr){4-5} \cmidrule(lr){6-7} \cmidrule(lr){8-9}
Angle & G & R & G & R & G & R & G & R \\
\midrule
Social        & \textbf{.888} & .816 & \textbf{.984} & .980 & \textbf{.166} & .309 & \textbf{.746} & .421 \\
Legal         & \textbf{.885} & .794 & \textbf{.982} & .978 & \textbf{.180} & .342 & \textbf{.724} & .352 \\
Moral         & \textbf{.862} & .801 & \textbf{.977} & .963 & \textbf{.175} & .310 & \textbf{.738} & .419 \\
Economic      & \textbf{.857} & .795 & \textbf{.976} & .958 & \textbf{.187} & .320 & \textbf{.705} & .399 \\
Political     & \textbf{.865} & .665 & \textbf{.908} & .861 & \textbf{.162} & .297 & \textbf{.764} & .469 \\
Technological & \textbf{.844} & .632 & \textbf{.904} & .848 & \textbf{.171} & .346 & \textbf{.758} & .359 \\
\bottomrule
\end{tabular}
}
\caption{Angle-level inter-model agreement. G denotes default \textsc{GRASP}; R denotes \textsc{RAW}.}
\label{tab:angle_agreement}
\vspace{-4mm}
\end{wraptable}

We further analyze inter-model agreement at the level of semantic angles
(\textsc{Economic}, \textsc{Legal}, \textsc{Moral}, \textsc{Political},
\textsc{Social}, and \textsc{Technological}). For each debate and angle, we restrict each
model's ranking to the subset of arguments belonging to that angle and compute pairwise
agreement between all model pairs. Metrics are averaged within each debate and then
aggregated across pool and multi-turn settings.

Table~\ref{tab:angle_agreement} compares default \textsc{GRASP} against \textsc{RAW}
rankings. Across all angles, \textsc{GRASP} exhibits substantially higher agreement than
\textsc{RAW}. This indicates that the agreement gains are not driven by a single semantic
dimension, but persist across diverse argumentative frames.

\subsection{External Sanity Check: Recovering Human-Written Point--Counterpoint Relations}
\label{app:idebate_sanity}

The main experiments construct the attack matrix \(W\) using LLM-based directed attack
scoring. A natural concern is whether these models are reliable constructors of local
interaction graphs outside our synthetic debate corpus. We therefore conduct a small
external sanity check on human-written arguments from the iDebate/IDEA Debatabase.\footnote{
\url{https://idebate.net/}
}

\paragraph{Data structure.}
iDebate pages have an editorial point--counterpoint format. Each debate page contains a
motion, followed by two sections: ``Points For'' and ``Points Against.'' Within each section,
a \textsc{POINT} presents an argument for that section, and a paired \textsc{COUNTERPOINT}
provides a direct objection to that specific point. We treat each \textsc{POINT} and each
\textsc{COUNTERPOINT} as a separate argument node, and use the human-authored pairing as an
external signal of directed attack:
\[
\textsc{COUNTERPOINT} \longrightarrow \textsc{POINT}.
\]

The resulting structure is illustrated below:
\begin{center}
\small
\begin{tabular}{p{0.18\linewidth}p{0.34\linewidth}p{0.36\linewidth}}
\toprule
Section & \textsc{POINT} & Paired \textsc{COUNTERPOINT} \\
\midrule
Points For
& Argument supporting the motion
& Objection to that pro argument:
  \(\textsc{COUNTERPOINT}\rightarrow\textsc{POINT}\) \\
\addlinespace[1mm]
Points Against
& Argument opposing the motion
& Objection to that con argument:
  \(\textsc{COUNTERPOINT}\rightarrow\textsc{POINT}\) \\
\bottomrule
\end{tabular}
\end{center}

For stance metadata, this means that \textsc{POINT}s under ``Points For'' are Pro,
\textsc{COUNTERPOINT}s under ``Points For'' are Con, \textsc{POINT}s under ``Points
Against'' are Con, and \textsc{COUNTERPOINT}s under ``Points Against'' are Pro. However,
the diagnostic itself does not depend on stance labels; it only tests whether the induced
attack scores recover the explicit human-authored \(\textsc{COUNTERPOINT}\rightarrow
\textsc{POINT}\) relations.

\paragraph{Example.}
For a page on whether states should ratify the U.N. Convention on the Rights of Migrant
Workers,\footnote{\url{https://idebate.net/that-all-states-should-immediately-ratify-the-u-n-convention-on-the-rights-of-migrant-workers-and-t~b773/}}
one \textsc{POINT} argues that migrants face a growing human-rights problem that requires
international protection. Its paired \textsc{COUNTERPOINT} argues that migration is a broader
policy problem, not primarily a migrant-rights problem, and therefore directly challenges
the framing of the point. In our diagnostic, this pair contributes one labeled directed
attack edge from the \textsc{COUNTERPOINT} to the paired \textsc{POINT}.

\paragraph{Protocol.}
We use 12 iDebate pages, retaining pages with at least 10 extracted argument units. This
yields 304 argument nodes and 152 explicit point--counterpoint pairs. We use the same judge model set as in Section~\ref{sec:baseline_models}, except that
\texttt{meta-llama/llama-4-scout} is replaced with
\texttt{qwen/qwen3.5-flash-02-23} due to provider availability during the external
sanity-check run. For each judge model,
we construct \(W\) using the same pairwise attack-scoring prompt as in the main experiments.
We then compare the score assigned to each explicit
\(\textsc{COUNTERPOINT}\rightarrow\textsc{POINT}\) edge against other incoming edges to the
same \textsc{POINT}. We report: (i) the mean attack score for explicit counterpoint edges,
(ii) the mean attack score for other incoming edges, (iii) the AUC for distinguishing
explicit counterpoint edges from other incoming edges using \(W_{ij}\), and (iv) Hit@\(k\),
the fraction of \textsc{POINT}s for which the paired \textsc{COUNTERPOINT} appears among
the top-\(k\) strongest incoming attackers.

\begin{table}[t]
\centering
\small
\setlength{\tabcolsep}{5pt}
\begin{tabular}{lcccccc}
\toprule
Source
& \(W_{\mathrm{CP}\rightarrow\mathrm{Point}}\)
& \(W_{\mathrm{Other}\rightarrow\mathrm{Point}}\)
& AUC
& Hit@1
& Hit@3
& Hit@5 \\
\midrule
iDebate
& 0.756
& 0.382
& 0.829
& 0.231
& 0.493
& 0.674 \\
\bottomrule
\end{tabular}
\caption{External sanity check on human-written iDebate/IDEA point--counterpoint pairs.
Each explicit \(\textsc{COUNTERPOINT}\rightarrow\textsc{POINT}\) relation is treated as a
human-authored directed attack. Scores are macro-averaged across six LLM judge models.}
\label{tab:idebate_sanity}
\vspace{-7mm}
\end{table}

\paragraph{Results.}
Table~\ref{tab:idebate_sanity} shows that the induced attack scores recover the
human-authored point--counterpoint structure substantially better than chance. Across six
judge models, explicit counterpoint edges receive much higher attack scores than other
incoming edges to the same points (\(0.756\) vs.\ \(0.382\)). The resulting AUC is \(0.829\),
indicating that \(W\) reliably distinguishes human-authored counterpoint attacks from
non-paired incoming edges. The paired counterpoint is also ranked among the top five
incoming attackers for \(67.4\%\) of points. The effect is consistent across all six judge
models: every model assigns a higher mean score to explicit
\(\textsc{COUNTERPOINT}\rightarrow\textsc{POINT}\) edges than to other incoming edges.

This diagnostic supports the use of LLMs as local interaction scorers: although direct
LLM rankings are unstable, their pairwise attack judgments recover externally provided
point--counterpoint relations in human-written debate material. We do not treat iDebate as
a global ranking benchmark, since its editorial structure consists of paired claims and
rebuttals rather than a naturally instantiated multi-argument interaction graph. Instead,
the result isolates the reliability of the local \(W\)-construction step used by
\textsc{GRASP}.

\section{\textsc{StructDebate} Construction Details}
\label{app:dataset-details}

\paragraph{Prompting and controls.}
All generators receive the same prompt template for a given setting, with only the motion,
stance, semantic angle, and debate history varied. In the pool setting, arguments are
generated independently. In the multi-turn setting, each turn receives the previous turns as
context and is instructed to respond from the alternating \textsc{Pro}/\textsc{Con} side.
We use fixed generation templates across models to reduce prompt-induced variation and keep
stance, angle, and debate setting comparable. The exact generation prompt schemas are provided
below.

\paragraph{Why six semantic angles?}
The six angles---\textsc{Economic}, \textsc{Legal}, \textsc{Moral}, \textsc{Political},
\textsc{Social}, and \textsc{Technological}---are controlled prompting dimensions rather
than an exhaustive taxonomy of argument types. They were chosen to induce diverse but
comparable argumentative frames commonly appearing in public-policy debates, while keeping
the generation process simple, balanced, and reproducible. This design follows the use of
qualitative category construction in social computing and applied qualitative research, where
categories are often introduced to support coverage and comparison rather than to define a
complete ontology~\citep{braun2006using,mcdonald2019reliability,glaser2017discovery}.

Concretely, the angles serve three experimental purposes. First, they reduce the chance that
arguments for the same motion collapse into near-paraphrases. Second, they ensure that both
sides of each motion are represented across comparable thematic dimensions. Third, they allow
angle-level robustness checks: if \textsc{GRASP}'s stability were driven by one narrow kind of
argument, this would appear as angle-specific variation. We report such checks in
Appendix~\ref{app:angle_agreement}.

\subsection{Argument Generation Prompt Schemas}
\label{app:generation_prompts}

Below we provide the prompt schemas used to generate \textsc{StructDebate}. All generations
were sampled with default temperature.

\paragraph{POOL argument generation prompt.}
\label{prompt:pool_gen}

\begin{PromptVerbatim}
POOL_SYSTEM = "You generate debate arguments. Output must be valid JSON only."

def pool_user_content(motion, side, angle, k):
    return json.dumps({
        "task": "Generate short debate arguments.",
        "motion": motion,
        "side": side,
        "angle": angle,
        "num_arguments": k,
        "constraints": {
            "length": "2-3 sentences each",
            "style": "plain, analytical, no rhetorical flourish",
            "no_lists": True,
            "no_citations": True,
            "no_quotes": True,
            "one_core_claim_plus_one_reason": True,
            "avoid_metaphor": True
        },
        "output_requirements": [
            "Return ONLY a JSON object.",
            "No markdown, no code fences, no commentary.",
            "Schema: {\"arguments\": [\"...\", \"...\"]}"
        ]
    }, ensure_ascii=False)
\end{PromptVerbatim}

\paragraph{MULTITURN argument generation prompt.}
\label{prompt:multiturn_gen}

\begin{PromptVerbatim}
MULTITURN_SYSTEM = "You participate in a structured debate. Output must be valid JSON only."

def multiturn_user_content(motion, side, angle, history, turn_idx):
    return json.dumps({
        "task": "Write one debate turn.",
        "motion": motion,
        "side": side,
        "required_angle": angle,
        "turn_index": turn_idx,
        "debate_history": history,
        "constraints": {
            "length": "2-4 sentences",
            "must_address_previous": (side == "Con"),
            "style": "plain, analytical, no rhetorical flourish",
            "no_lists": True,
            "no_citations": True,
            "no_quotes": True,
            "avoid_metaphor": True
        },
        "output_requirements": [
            "Return ONLY a JSON object.",
            "No markdown, no code fences, no commentary.",
            "Schema: {\"text\": \"...\"}"
        ]
    }, ensure_ascii=False)
\end{PromptVerbatim}

\subsection{Debate Motions}
\label{app:motions}

We list the 50 debate motions used in \textsc{StructDebate}, grouped by thematic category.

\paragraph{Technology \& AI}
\begin{enumerate}
\setcounter{enumi}{0}
\item This House would ban the use of AI in primary and secondary education.
\item This House would ban stablecoins pegged to national currencies.
\item This House would mandate all businesses to accept only digital payments.
\item This House would require electric vehicle manufacturers to refuse sales in countries with poor environmental records.
\item This House would allow individuals to erase morally distressing memories.
\item This House would ban facial recognition technology in public spaces.
\item This House would require social media companies to make their recommendation algorithms public.
\end{enumerate}

\paragraph{Economics \& Labor}
\begin{enumerate}
\setcounter{enumi}{7}
\item This House would abolish the minimum wage law.
\item This House would allow the sale and purchase of human organs.
\item This House would ban sovereign wealth funds from investing in private equity.
\item This House would require companies to make the salaries of all their employees publicly available.
\item This House would allow workers in less economically developed countries to waive labor protections in exchange for higher wages.
\item This House would allow lump-sum scholarships as an alternative to periodic disbursements.
\item This House would introduce a universal basic income funded by wealth taxes.
\end{enumerate}

\paragraph{Law, Rights, and Governance}
\begin{enumerate}
\setcounter{enumi}{14}
\item This House would require warrants for searches instead of allowing stop-and-frisk.
\item This House would criminalize dangerous in-play actions in professional sport.
\item This House would ban corporate donations to political campaigns.
\item This House would allow democratic governments to overturn supranational court decisions with a simple legislative majority.
\item This House would introduce a binding ``None of the Above'' option on national election ballots.
\item This House would allow citizens to vote directly on impeachment cases through a national referendum.
\item This House would allow subnational jurisdictions to overturn federal policies via citizen referendum.
\end{enumerate}

\paragraph{Social \& Moral Policy}
\begin{enumerate}
\setcounter{enumi}{21}
\item This House would allow parents to administer behavioral enhancement drugs to children without their consent.
\item This House would mandate comprehensive queer education in schools.
\item This House would restrict state funding only to art perceived as valuable by the general public.
\item This House would ban the private ownership of historical artifacts.
\item This House would allow high school students to rate teachers as a primary basis for pay increases.
\item This House would introduce compulsory national service for all citizens.
\end{enumerate}

\paragraph{Environment \& Development}
\begin{enumerate}
\setcounter{enumi}{27}
\item This House would invest preferentially in climate startups in developing countries rather than developed countries.
\item This House would ban the export of waste to developing countries.
\item This House would nationalize luxury ingredient production in producing countries.
\item This House would impose carbon tariffs on imported goods.
\item This House would ban advertising for environmentally harmful products.
\end{enumerate}

\paragraph{International Relations \& Identity}
\begin{enumerate}
\setcounter{enumi}{32}
\item This House would ban countries from offering financial incentives to foreign athletes to switch nationality.
\item This House would allow prisoners serving life without parole to opt for the death penalty.
\item This House would ban proselytization acts in liberal democracies.
\item This House would restrict immigration based on environmental carrying capacity.
\item This House would allow refugees to be settled through private sponsorship markets.
\end{enumerate}

\paragraph{Media, Culture, and Education}
\begin{enumerate}
\setcounter{enumi}{37}
\item This House would ban the consolidation of major news organizations.
\item This House would introduce a youth-weighted voting system in democratic elections.
\item This House would require public broadcasters to allocate equal airtime to all political parties.
\item This House would abolish standardized testing in university admissions.
\item This House would mandate media literacy education for all adults.
\end{enumerate}

\paragraph{Security \& State Power}
\begin{enumerate}
\setcounter{enumi}{42}
\item This House would ban private military contractors.
\item This House would allow preventive detention for credible terrorism threats.
\item This House would restrict police use of lethal force to extreme circumstances only.
\end{enumerate}

\paragraph{Science \& Bioethics}
\begin{enumerate}
\setcounter{enumi}{45}
\item This House would allow gene editing of embryos for non-medical traits.
\item This House would require mandatory vaccination for all citizens.
\item This House would ban animal testing for cosmetic products.
\end{enumerate}

\paragraph{Platform Power \& Markets}
\begin{enumerate}
\setcounter{enumi}{48}
\item This House would break up dominant technology monopolies.
\item This House would require platforms to compensate users for personal data usage.
\end{enumerate}

\section{Hyperparameter Sensitivity via Cross-Model Agreement}
\label{app:gridsearch}

We perform a \emph{post-hoc} grid search to analyze the sensitivity of \textsc{GRASP} to
its hyperparameters, using cross-model agreement among \textsc{GRASP}-induced rankings as
the diagnostic signal. If \textsc{GRASP} captures a stable structural signal, then attack
graphs induced by different judge models should yield similar final rankings across a broad
range of hyperparameter choices. This analysis is diagnostic only: the main experiments use
a single \emph{a priori} hyperparameter setting, and the grid search is used to assess
robustness rather than to tune the reported results.

\paragraph{Protocol.}
For each triple \((\alpha,\beta,\gamma)\), we compute a \textsc{GRASP} ranking for each
judge model and measure mean pairwise Kendall-\(\tau\) agreement across all judge pairs. We
then average this agreement across debates. We search over
\[
\alpha \in \{0.1, 0.25, 0.5, 1.0\}, \qquad
\beta  \in \{0.1, 0.25, 0.5, 0.75\}, \qquad
\gamma \in \{0.6, 0.8, 0.9, 1.0\}.
\]

\paragraph{Results.}
The best post-hoc configuration is
\[
\alpha = 1.0, \qquad \beta = 0.25, \qquad \gamma = 0.6,
\]
achieving mean pairwise Kendall-\(\tau = 0.624\) across 300 debates.The main-paper setting \((\alpha,\beta,\gamma)=(1.0,0.6,0.9)\) obtains Kendall-\(\tau = 0.6245\), within \(0.0005\) of the post-hoc optimum. This indicates that the agreement gains are not sensitive to a narrow hyperparameter choice.

\section{Judging Prompts and Prompt Optimization}
\label{app:prompts}

\subsection{Prompt Optimization for RAW Rankings}
\label{app:raw_prompt_opt}

We test whether refining the \textsc{RAW} ranking prompt improves inter-judge agreement.
All prompts are evaluated with temperature \(0\). We compare three variants: the original
\textsc{RAW} prompt used in the main paper, an adjudication-style prompt emphasizing logic,
impact, and relevance (\textsc{RAW-v2}), and a comparative prompt emphasizing supersession
and logical dominance (\textsc{RAW-v3}).

\begin{wraptable}{r}{0.62\textwidth}
\centering
\small
\vspace{-4mm}
\setlength{\tabcolsep}{3pt}
\renewcommand{\arraystretch}{0.92}
\resizebox{0.61\textwidth}{!}{%
\begin{tabular}{l cc cc cc cc cc}
\toprule
& \multicolumn{2}{c}{\(\tau \uparrow\)}
& \multicolumn{2}{c}{Swap \(\downarrow\)}
& \multicolumn{2}{c}{\(\rho \uparrow\)}
& \multicolumn{2}{c}{Top-3 \(\uparrow\)}
& \multicolumn{2}{c}{Top-5 \(\uparrow\)} \\
\cmidrule(lr){2-3} \cmidrule(lr){4-5} \cmidrule(lr){6-7}
\cmidrule(lr){8-9} \cmidrule(lr){10-11}
Prompt & P & MT & P & MT & P & MT & P & MT & P & MT \\
\midrule
Original & \textbf{.34} & \textbf{.31} & \textbf{.33} & \textbf{.35}
         & \textbf{.43} & \textbf{.38} & .39 & .41 & .42 & .47 \\
RAW-v2   & .18 & .15 & .41 & .43 & .23 & .19
         & .40 & \textbf{.51} & .46 & \textbf{.57} \\
RAW-v3   & .17 & .11 & .41 & .44 & .24 & .16
         & \textbf{.41} & .41 & \textbf{.48} & .48 \\
\bottomrule
\end{tabular}
}
\caption{Effect of \textsc{RAW} prompt optimization on inter-model agreement. P and MT denote
Pool and Multi-turn settings.}
\label{tab:raw_prompt_opt}
\vspace{-4mm}
\end{wraptable}

Table~\ref{tab:raw_prompt_opt} reports inter-model agreement for each prompt. Neither
refined prompt improves Kendall agreement or Spearman correlation over the original
\textsc{RAW} prompt. Both variants substantially reduce Kendall's \(\tau\), indicating
weaker global consistency across judges. Although small gains appear in Top-\(k\) overlap,
they coincide with increased swap distance, suggesting that limited agreement on a few top
arguments masks broader ranking instability.

\paragraph{Takeaway.}
Prompt engineering alone does not make end-to-end \textsc{RAW} ranking reliable. Even more
structured adjudication instructions fail to recover high inter-model agreement, reinforcing
the need for structural aggregation from local pairwise interactions.

\subsection{Judging Prompt Schemas}
\label{app:prompt_schemas}

Below we provide the prompt schemas used for direct ranking and pairwise attack scoring.

\paragraph{RAW ranking prompt (Original).}
\label{prompt:original}

\begin{PromptVerbatim}
RAW_SYSTEM = (
    "You are a careful debate judge. "
    "Rank arguments by how strong and sufficient they are. "
    "Return ONLY valid JSON."
)

def raw_user_payload(motion: str, args: list[dict]):
    return {
        "task": "Rank debate arguments by structural strength for the motion.",
        "motion": motion,
        "arguments": [
            {
                "id": a["arg_id"],
                "side": a["side"],
                "angle": a["angle"],
                "turn": int(a["turn"]),
                "text": a["text_trunc"],
            }
            for a in args
        ],
        "output_requirements": [
            "Return ONLY a JSON object.",
            "No markdown, no code fences, no commentary.",
            "Schema: {\"ranking\": [\"<arg_id>\", ...]}",
            "ranking must contain each input id exactly once."
        ],
    }
\end{PromptVerbatim}

\paragraph{RAW ranking prompt (v2).}
\label{prompt:v2}

\begin{PromptVerbatim}
RAW_SYSTEM = (
    "You are an expert World Schools Debate adjudicator. "
    "Your goal is to evaluate arguments based on logical coherence, evidence, and impact. "
    "You must remain neutral and ignore your own stance on the motion. "
    "Return ONLY valid JSON."
)

def raw_user_payload(motion: str, args: list[dict]):
    return {
        "task": "Rank the provided debate arguments from strongest to weakest.",
        "motion": motion,
        "evaluation_criteria": {
            "1. Logic": "Are the premises true and does the conclusion follow? Is the reasoning explained clearly?",
            "2. Impact": "Does the argument show why this outcome matters significantly to the stakeholders?",
            "3. Relevance": "How directly does it address the specific motion provided?"
        },
        "arguments": [
            {
                "id": a["arg_id"],
                "side": a["side"],
                "text": a["text_trunc"],
            }
            for a in args
        ],
        "output_requirements": [
            "Return ONLY a raw JSON object.",
            "DO NOT include markdown formatting.",
            "Schema: {\"ranking\": [\"<arg_id_best>\", ...]}",
            "ranking must contain every input id exactly once."
        ],
    }
\end{PromptVerbatim}

\paragraph{RAW ranking prompt (v3).}
\label{prompt:v3}

\begin{PromptVerbatim}
RAW_SYSTEM = (
    "You are a rigorous logic engine designed to compare debating points. "
    "Determine which arguments successfully supersede or outweigh the others. "
    "Return ONLY valid JSON."
)

def raw_user_payload(motion: str, args: list[dict]):
    return {
        "task": "Perform a comparative ranking of the debate arguments for the given motion.",
        "motion": motion,
        "instructions": [
            "Read all arguments first.",
            "Identify arguments that rely on logical fallacies and rank them lower.",
            "Identify arguments with strong mechanisms and high-stakes impacts and rank them higher.",
            "If two arguments are similar, rank the one with more nuance/detail higher."
        ],
        "arguments": [
            {
                "id": a["arg_id"],
                "side": a["side"],
                "angle": a["angle"],
                "text": a["text_trunc"],
            }
            for a in args
        ],
        "output_requirements": [
            "Output valid JSON only.",
            "No prologue or epilogue.",
            "Schema: {\"ranking\": [\"<strongest_arg_id>\", ..., \"<weakest_arg_id>\"]}",
            "Ensure strict adherence to the schema."
        ],
    }
\end{PromptVerbatim}

\paragraph{Pairwise attack-scoring prompt used to construct \(W\) for GRASP.}
\label{prompt:nli}

\begin{PromptVerbatim}
NLI_SYSTEM = (
    "You are an interaction scorer for debate arguments.\n"
    "Given Argument A and Argument B, output how strongly Argument A attacks "
    "or undermines Argument B.\n\n"
    "Return ONLY valid JSON with keys:\n"
    "{\"attack_score\": number}\n\n"
    "Rules:\n"
    "- attack_score must be a real-valued number in [0,1].\n"
    "- 0.0 means: Argument A does not undermine Argument B at all "
    "(supportive or unrelated).\n"
    "- 1.0 means: Argument A directly contradicts or strongly refutes Argument B.\n"
    "- Use the full continuous range; do NOT restrict to discrete steps.\n"
    "- You must always output a score (never null).\n"
    "- Output JSON only. No extra text."
)

def nli_user_payload(attacker_text: str, target_text: str):
    return {
        "task": "Directed attack scoring",
        "argument_a_attacker": attacker_text,
        "argument_b_target": target_text,
        "output_format": {"attack_score": "float in [0,1]"},
        "rules": ["Output JSON only. No markdown. No extra keys."],
    }
\end{PromptVerbatim}

\paragraph{RAW+SS ranking prompt.}
\label{prompt:raw_ss}

\begin{PromptVerbatim}
RAW_SS_SYSTEM = (
    "You are a careful structural argument judge. "
    "Rank arguments by structural sufficiency, not by persuasion, truth, rhetoric, style, factuality, "
    "or your own agreement with the motion. "
    "Return ONLY valid JSON."
)

STRUCTURAL_SUFFICIENCY_TEXT = """
Structural sufficiency is a graph-relative notion of argument robustness.

An argument is structurally strong when it withstands the explicit attacks present in the debate.
An argument should be ranked higher if its attackers are themselves countered, weakened, or answered by other arguments.

Do not rank by:
- truth,
- factual correctness,
- persuasiveness,
- rhetorical polish,
- writing style,
- verbosity,
- your own agreement with the claim.

Focus only on how well the argument is positioned in the explicit attack-defense structure of the debate.
"""

def raw_ss_user_payload(motion: str, args: list[dict]):
    return {
        "task": "Rank debate arguments by structural sufficiency.",
        "motion": motion,
        "definition": STRUCTURAL_SUFFICIENCY_TEXT,
        "arguments": [
            {
                "id": a["arg_id"],
                "side": a["side"],
                "angle": a["angle"],
                "turn": int(a["turn"]),
                "text": a["text_trunc"],
            }
            for a in args
        ],
        "output_requirements": [
            "Return ONLY a JSON object.",
            "No markdown, no code fences, no commentary.",
            "Schema: {\"ranking\": [\"<arg_id>\", ...]}",
            "ranking must contain each input id exactly once."
        ],
    }
\end{PromptVerbatim}

\section{Case Study Argument Texts}
\label{app:case_texts}

\begingroup
\small
\renewcommand{\arraystretch}{1.3} 
\begin{xltabular}{\textwidth}{l l l r r X}
\caption{Metadata and full text for the most rank-volatile arguments in debate MT-048. \# Attk. and Mean $\mu$ are obtained via $W$ (\texttt{gpt-5.2-chat}).} \label{tab:case_texts} \\

\toprule
Arg & Stance & Angle & \#Attk. & Mean $\mu$ & Text \\
\midrule
\endfirsthead

\multicolumn{6}{c}{{\bfseries \tablename\ \thetable{} -- Continued from previous page}} \\
\toprule
Arg & Stance & Angle & \#Attk. & Mean $\mu$ & Text \\
\midrule
\endhead

\midrule
\multicolumn{6}{r}{{Continued on next page...}} \\
\endfoot

\bottomrule
\endlastfoot

arg\_19 & Pro & Social & 10 & 0.6850 &
Dominant technology monopolies exacerbate social inequalities by prioritizing content that favors affluent users and marginalizing voices from lower socioeconomic groups. Breaking them up would enable smaller platforms to cater specifically to diverse demographics, fostering more inclusive online communities and reducing digital divides. This change would also improve social cohesion by decentralizing control over algorithms that currently amplify polarizing content. \\ \addlinespace

arg\_2 & Con & Techn. & 12 & 0.6400 &
Dominant technology monopolies drive technological innovation by concentrating resources for large-scale research and development that smaller entities cannot match, countering the claim that they suppress advancements. Breaking them up would fragment essential platforms and standards, potentially slowing the integration of new technologies across ecosystems. This fragmentation could limit rather than promote broader access to cutting-edge solutions, as coordinated efforts by monopolies often accelerate widespread adoption. \\ \addlinespace

arg\_4 & Con & Pol. & 12 & 0.5683 &
Dominant technology monopolies provide a centralized point for political accountability, allowing governments to engage with fewer entities for effective oversight rather than dealing with fragmented influences from multiple smaller companies. Breaking them up would likely increase overall lobbying efforts as numerous firms compete for policy favors, potentially overwhelming democratic processes instead of diluting corporate power. This concentration enables more efficient implementation of regulations that address public welfare without the chaos of dispersed political pressures. \\ \addlinespace

arg\_15 & Pro & Econ. & 10 & 0.6700 &
Dominant technology monopolies distort economic efficiency by capturing excessive profits through barriers to entry that prevent efficient resource allocation across industries. Breaking them up would allow for more dynamic markets where new entrants can compete on merit, leading to improved productivity and broader economic distribution of wealth. This change would also reduce the risk of market failures associated with over-reliance on a few firms for critical technological services. \\ \addlinespace

arg\_17 & Pro & Moral & 10 & 0.6970 &
Dominant technology monopolies create moral issues by enabling the exploitation of user data without sufficient accountability, which undermines trust in digital systems and harms individual dignity. Breaking them up would foster a more ethical environment where multiple companies must compete on the basis of responsible practices rather than relying on unchecked dominance. This division would also reduce the moral risks associated with concentrated control over information flows that can amplify societal divisions. \\ \addlinespace

arg\_11 & Pro & Social & 10 & 0.5990 &
Dominant technology monopolies contribute to social isolation by designing platforms that prioritize addictive engagement over meaningful interactions among users. Breaking them up would encourage the development of diverse social networks that facilitate healthier community building and reduce echo chambers. This restructuring would also enhance social equity by allowing smaller entities to address the needs of underserved populations more effectively. \\

\end{xltabular}
\endgroup

\section{Additional Qualitative Case Studies}
\label{app:contrast_cases}

We present four additional cases exhibiting the same pattern:
near-unanimous high ranking under \textsc{GRASP} and highly dispersed, low ranking under \textsc{RAW}. In all the examples, \# Attackers and Mean attack strength are obtained from the $W$ constructed by the \texttt{openai/gpt-5.2-chat} judge. 

\subsection*{Case A (Multi-turn)}

\begin{tcolorbox}[colback=green!5, colframe=green!60!black, title=\textbf{This House would introduce a binding ``None of the Above'' option on national election ballots.}, fonttitle=\bfseries, bottom=1mm]
\small
\begin{tabular*}{\linewidth}{@{\extracolsep{\fill}}ll ll ll@{}}
\textbf{Generator:} & \texttt{gpt-5.2-pro} & \textbf{Stance:} & Pro & \textbf{Angle:} & Tech. \\
\textbf{Turn:} & 2 & \textbf{\# Attk:} & 10 & \textbf{Mean:} & 0.205
\end{tabular*}
\vspace{1mm} \hrule \vspace{2mm}
\textbf{Argument:} \emph{A binding None of the Above can be implemented cleanly in modern election technology because it is just another selectable contest option with deterministic tabulation and a predefined trigger for reruns, reducing reliance on informal protest mechanisms like ballot spoilage that are harder to interpret and audit. It improves data quality by distinguishing abstention, undervotes, and explicit rejection, which enables better diagnostics of candidate quality and voter dissatisfaction without compromising ballot secrecy. In jurisdictions using paper-backed electronic systems, NOTA can be captured on both the human-readable ballot and the cast-vote record, supporting risk-limiting audits and minimizing disputes about intent.}
\end{tcolorbox}

\noindent
\begin{minipage}[t]{0.49\textwidth}
    \begin{tcolorbox}[colback=blue!5, colframe=blue!70!black, title=\textbf{GRASP Rankings}, fonttitle=\bfseries, bottom=1mm]
    \footnotesize
    \texttt{anthropic/claude-haiku-4.5}: 1 \\
    \texttt{deepseek/deepseek-v3.2}: 1 \\
    \texttt{google/gemini-3-flash-preview}: 1 \\
    \texttt{meta-llama/llama-4-scout}: 3 \\
    \texttt{openai/gpt-5.2-chat}: 1 \\
    \texttt{xiaomi/mimo-v2-flash}: 3
    \end{tcolorbox}
\end{minipage}\hfill
\begin{minipage}[t]{0.49\textwidth}
    \begin{tcolorbox}[colback=red!5, colframe=red!70!black, title=\textbf{RAW Rankings}, fonttitle=\bfseries, bottom=1mm]
    \footnotesize
    \texttt{anthropic/claude-haiku-4.5}: 19 \\
    \texttt{deepseek/deepseek-v3.2}: 19 \\
    \texttt{google/gemini-3-flash-preview}: 8 \\
    \texttt{meta-llama/llama-4-scout}: 5 \\
    \texttt{openai/gpt-5.2-chat}: 10 \\
    \texttt{xiaomi/mimo-v2-flash}: 20
    \end{tcolorbox}
\end{minipage}

\subsection*{Case B (Pool)}

\begin{tcolorbox}[colback=green!5, colframe=green!60!black, title=\textbf{This House would restrict state funding only to art perceived as valuable by the general public.}, fonttitle=\bfseries, bottom=1mm]
\small
\begin{tabular*}{\linewidth}{@{\extracolsep{\fill}}ll ll ll@{}}
\textbf{Generator:} & \texttt{mistral-sm-c} & \textbf{Stance:} & Pro & \textbf{Angle:} & Social \\
\textbf{Setting:} & Pool & \textbf{\# Attk:} & 29 & \textbf{Mean:} & 0.271
\end{tabular*}
\vspace{1mm} \hrule \vspace{2mm}
\textbf{Argument:} \emph{Public funding for widely valued art maximizes the likelihood of creating accessible cultural experiences, ensuring that marginalized or economically disadvantaged groups can engage with meaningful creative expressions, thereby addressing systemic inequalities in cultural participation.}
\end{tcolorbox}

\noindent
\begin{minipage}[t]{0.49\textwidth}
    \begin{tcolorbox}[colback=blue!5, colframe=blue!70!black, title=\textbf{GRASP Rankings}, fonttitle=\bfseries, bottom=1mm]
    \footnotesize
    \texttt{anthropic/claude-haiku-4.5}: 1 \\
    \texttt{deepseek/deepseek-v3.2}: 3 \\
    \texttt{google/gemini-3-flash-preview}: 2 \\
    \texttt{meta-llama/llama-4-scout}: 1 \\
    \texttt{openai/gpt-5.2-chat}: 2 \\
    \texttt{xiaomi/mimo-v2-flash}: 1
    \end{tcolorbox}
\end{minipage}\hfill
\begin{minipage}[t]{0.49\textwidth}
    \begin{tcolorbox}[colback=red!5, colframe=red!70!black, title=\textbf{RAW Rankings}, fonttitle=\bfseries, bottom=1mm]
    \footnotesize
    \texttt{anthropic/claude-haiku-4.5}: 11 \\
    \texttt{deepseek/deepseek-v3.2}: 8 \\
    \texttt{google/gemini-3-flash-preview}: -- \\
    \texttt{meta-llama/llama-4-scout}: -- \\
    \texttt{openai/gpt-5.2-chat}: 33 \\
    \texttt{xiaomi/mimo-v2-flash}: 33
    \end{tcolorbox}
\end{minipage}

\subsection*{Case C (Pool)}

\begin{tcolorbox}[colback=green!5, colframe=green!60!black, title=\textbf{This House would introduce a youth-weighted voting system in democratic elections.}, fonttitle=\bfseries, bottom=1mm]
\small
\begin{tabular*}{\linewidth}{@{\extracolsep{\fill}}ll ll ll@{}}
\textbf{Generator:} & \texttt{qwen3-max} & \textbf{Stance:} & Pro & \textbf{Angle:} & Tech. \\
\textbf{Setting:} & Pool & \textbf{\# Attk:} & 17 & \textbf{Mean:} & 0.17
\end{tabular*}
\vspace{1mm} \hrule \vspace{2mm}
\textbf{Argument:} \emph{Secure blockchain-based voting platforms can ensure transparency and prevent tampering in a youth-weighted system, maintaining trust while accommodating variable vote weights. These technologies already support complex voting rules in pilot programs.}
\end{tcolorbox}

\noindent
\begin{minipage}[t]{0.49\textwidth}
    \begin{tcolorbox}[colback=blue!5, colframe=blue!70!black, title=\textbf{GRASP Rankings}, fonttitle=\bfseries, bottom=1mm]
    \footnotesize
    \texttt{anthropic/claude-haiku-4.5}: 2 \\
    \texttt{deepseek/deepseek-v3.2}: 1 \\
    \texttt{google/gemini-3-flash-preview}: 1 \\
    \texttt{meta-llama/llama-4-scout}: 3 \\
    \texttt{openai/gpt-5.2-chat}: 1 \\
    \texttt{xiaomi/mimo-v2-flash}: 1
    \end{tcolorbox}
\end{minipage}\hfill
\begin{minipage}[t]{0.49\textwidth}
    \begin{tcolorbox}[colback=red!5, colframe=red!70!black, title=\textbf{RAW Rankings}, fonttitle=\bfseries, bottom=1mm]
    \footnotesize
    \texttt{anthropic/claude-haiku-4.5}: 20 \\
    \texttt{deepseek/deepseek-v3.2}: 20 \\
    \texttt{google/gemini-3-flash-preview}: 38 \\
    \texttt{meta-llama/llama-4-scout}: 17 \\
    \texttt{openai/gpt-5.2-chat}: 38 \\
    \texttt{xiaomi/mimo-v2-flash}: 39
    \end{tcolorbox}
\end{minipage}

\subsection*{Case D (Pool)}

\begin{tcolorbox}[colback=green!5, colframe=green!60!black, title=\textbf{This House would allow gene editing of embryos for non-medical traits.}, fonttitle=\bfseries, bottom=1mm]
\small
\begin{tabular*}{\linewidth}{@{\extracolsep{\fill}}ll ll ll@{}}
\textbf{Generator:} & \texttt{grok-4} & \textbf{Stance:} & Pro & \textbf{Angle:} & Pol. \\
\textbf{Setting:} & Pool & \textbf{\# Attk:} & 21 & \textbf{Mean:} & 0.170
\end{tabular*}
\vspace{1mm} \hrule \vspace{2mm}
\textbf{Argument:} \emph{This approach strengthens national innovation in biotechnology from a political perspective. Governments that permit such editing encourage research and development, positioning the country as a leader in global scientific progress and enhancing its geopolitical influence.}
\end{tcolorbox}

\noindent
\begin{minipage}[t]{0.49\textwidth}
    \begin{tcolorbox}[colback=blue!5, colframe=blue!70!black, title=\textbf{GRASP Rankings}, fonttitle=\bfseries, bottom=1mm]
    \footnotesize
    \texttt{anthropic/claude-haiku-4.5}: 2 \\
    \texttt{deepseek/deepseek-v3.2}: 1 \\
    \texttt{google/gemini-3-flash-preview}: 1 \\
    \texttt{meta-llama/llama-4-scout}: 2 \\
    \texttt{openai/gpt-5.2-chat}: 1 \\
    \texttt{xiaomi/mimo-v2-flash}: 1
    \end{tcolorbox}
\end{minipage}\hfill
\begin{minipage}[t]{0.49\textwidth}
    \begin{tcolorbox}[colback=red!5, colframe=red!70!black, title=\textbf{RAW Rankings}, fonttitle=\bfseries, bottom=1mm]
    \footnotesize
    \texttt{anthropic/claude-haiku-4.5}: 14 \\
    \texttt{deepseek/deepseek-v3.2}: 35 \\
    \texttt{google/gemini-3-flash-preview}: 20 \\
    \texttt{meta-llama/llama-4-scout}: -- \\
    \texttt{openai/gpt-5.2-chat}: 35 \\
    \texttt{xiaomi/mimo-v2-flash}: 38
    \end{tcolorbox}
\end{minipage}

\paragraph{Takeaway.}
Across these additional cases, we observe the same qualitative pattern: arguments that are consistently identified by \textsc{GRASP} as structurally central (highly ranked across all judge models) are simultaneously relegated to low and highly dispersed ranks under \textsc{RAW}. These arguments tend to exhibit moderate to large numbers of attackers with non-trivial mean attack strength, indicating that \textsc{GRASP} promotes arguments whose importance emerges from their global position in the dialectical graph rather than from isolated surface persuasiveness. This further supports the claim that structural aggregation yields more stable and semantically grounded prioritization than direct judge rankings.

\section{Debate Decision Outcomes: Structural Strength vs. Convincingness}
\label{sec:ddo}

A natural question is whether \textsc{GRASP} can serve as a proxy for persuasive success or
rhetorical effectiveness. We test this on the Debate Decision Outcomes (DDO)
dataset~\citep{durmus2018exploring,durmus2019corpus}, which contains multi-round debates
with human votes indicating which side was more convincing, as well as point/status changes
reflecting debate performance. Our goal is not to optimize \textsc{GRASP} for persuasion,
but to test whether a structural-dialectical score implicitly correlates with human
convincingness.

\subsection{Dataset and Filtering}
\label{app:ddo_dataset}

\begin{wraptable}{r}{6.2cm}
\centering
\small
\vspace{-6mm}
\setlength{\tabcolsep}{4pt}
\aboverulesep=0ex
\belowrulesep=0ex
\begin{tabular}{@{}lr@{}}
\toprule
Stage & \# Debates \\
\midrule
Initial dataset & 78{,}376 \\
Fewer than 2 rounds removed & 2{,}484 \\
Fewer than 5 votes removed & 64{,}509 \\
Forfeits removed & 3{,}691 \\
Unknown outcome removed & 189 \\
Ties removed & 536 \\
\midrule
\textbf{Final retained} & \textbf{6{,}967} \\
\bottomrule
\end{tabular}
\caption{DDO filtering stages.}
\label{tab:filtering_stages}
\vspace{-6mm}
\end{wraptable}

Starting from 78{,}376 debates, we remove debates with fewer than two rounds, fewer than
five votes, forfeits, unknown outcomes, fewer than three usable votes, or tied convincingness
labels. The final filtered set contains 6{,}967 debates: 2{,}831 Pro wins and 4{,}136 Con
wins. We use train/validation/test splits of 1{,}000/250/5{,}717 debates, used only for
hyperparameter selection and final evaluation.

\subsection{Experimental Setup}
\label{app:ddo_setup}

For each debate, we construct a weighted attack matrix \(W\) using RoBERTa-large-MNLI. Each
entry \(W_{ij}\) is the contradiction probability for the ordered pair in which argument
\(a_i\) attacks argument \(a_j\). We define \(D=W^2\), compute argument-level \textsc{GRASP}
scores, and obtain side-level scores by summing argument scores for each side. The
higher-scoring side is predicted as the winner.

\begin{wraptable}{r}{6.8cm}
\centering
\small
\vspace{-4mm}
\setlength{\tabcolsep}{4pt}
\aboverulesep=0ex
\belowrulesep=0ex
\begin{tabular}{lcc}
\toprule
Method & Conv. Acc. & Status Acc. \\
\midrule
\rowcolor{gray!15}
\textbf{GRASP}              & 0.493 & 0.494 \\
GRASP-W$_\infty$            & 0.500 & 0.502 \\
GRASP-W$_1$                 & 0.499 & 0.503 \\
GRASP-W$_\infty$+$\bar{D}$  & 0.495 & 0.497 \\
GRASP-W$_1$+$\bar{D}$       & 0.491 & 0.493 \\
\midrule
H-Categorizer               & 0.508 & 0.510 \\
Binary In-Degree            & 0.507 & 0.510 \\
Max In-Degree               & 0.508 & 0.510 \\
Katz Centrality             & \textbf{0.508} & \textbf{0.512} \\
\bottomrule
\end{tabular}
\caption{Performance on DDO. All structural methods operate near chance, indicating that
structural graph strength is not a proxy for human convincingness.}
\label{tab:ddo_results}
\vspace{-9mm}
\end{wraptable}

We tune \textsc{GRASP} hyperparameters on the validation split and select
\(\alpha=0.5\), \(\beta=0.25\), and \(\gamma=1.0\), which yields validation accuracy
\(0.505\) for convincingness and \(0.514\) for point/status prediction. We compare against
the structural baselines defined in Appendix~\ref{subsec:struct_methods}. We report accuracy
for predicting the convincing side and the point/status winner, and Spearman correlation
between \textsc{GRASP} score difference and convincingness margin.

\subsection{Results}
\label{app:ddo_results}

Table~\ref{tab:ddo_results} shows that all structural methods perform near chance on both
convincingness and point/status prediction. \textsc{GRASP} variants are comparable to simple
structural baselines, and none provide meaningful predictive signal for human debate
outcomes. We further compute the Spearman correlation between \textsc{GRASP} side-score
difference \(\Delta s_{\textsc{GRASP}}\) and the convincingness margin, obtaining
\(\rho=-0.009\), indicating no meaningful monotonic association.

These results support the interpretation used throughout the paper: \textsc{GRASP} measures
structural robustness in an explicit interaction graph, not human convincingness,
persuasive effectiveness, or debate outcome likelihood.

\section{GRASP Pseudocode}
\label{app:grasp_pseudocode}

\begin{lstlisting}[style=graspPython,caption={GRASP fixed-point iteration (Python).},label={lst:grasp_pseudocode}]
import numpy as np

def safe_div(a, b, eps=1e-12):
    return a / (b + eps)

def grasp_scores(W, alpha=1.0, beta=0.25, gamma=0.6, max_iters=2000, tol=1e-10):
    #GRASP fixed-point iteration (unnormalized).
    W = np.maximum(W, 0.0).astype(np.float64)
    np.fill_diagonal(W, 0.0)

    # two-step defense: paths of length two
    D = W @ W
    D = np.maximum(D, 0.0)
    np.fill_diagonal(D, 0.0)

    n = W.shape[0]
    s = np.ones(n, dtype=np.float64)

    for _ in range(max_iters):
        atk = W.T @ s          # total incoming attack
        dfn = D.T @ s          # total incoming defense (two-step)

        F = safe_div(1.0 + beta * dfn, 1.0 + alpha * atk)
        s_next = (1.0 - gamma) * s + gamma * F

        if np.max(np.abs(s_next - s)) < tol:
            s = s_next
            break
        s = s_next

    return s

def grasp_ranking(W, **kwargs):
    s = grasp_scores(W, **kwargs)
    order = np.argsort(-s)     # best first
    return order, s
\end{lstlisting}



\end{document}